\def\eqref#1{equation~\ref{#1}}
\def\floor#1{\lfloor #1 \rfloor}
\def\1{\bm{1}}
\DeclareMathAlphabet{\mathsfit}{\encodingdefault}{\sfdefault}{m}{sl}
\SetMathAlphabet{\mathsfit}{bold}{\encodingdefault}{\sfdefault}{bx}{n}
\def\ie{\textit{i.e.,~}}
\def\eg{\textit{e.g.,~}}
\tikzset{
  bold arrow/.style={
    ->,
    line width=0.8pt, 
    >=latex,
   every node/.style={draw=thick, circle, align=center, inner sep=2pt, line width=5pt}, 
    postaction={draw, line width=0.8pt, -{Stealth[length=6pt, width=5pt]}, fill=gray!20}
  }
}
\theoremstyle{plain}
\newtheorem{theorem}{Theorem}[section]
\newtheorem{lemma}[theorem]{Lemma}
\theoremstyle{definition}
\newtheorem{definition}[theorem]{Definition}
\theoremstyle{remark}
\title{
Projection Head is Secretly an Information Bottleneck
}
\author{
  Zhuo Ouyang\textsuperscript{1$*$} \qquad Kaiwen Hu\textsuperscript{2}\thanks{Equal Contribution.} \qquad Qi Zhang\textsuperscript{3} 
  \qquad Yifei Wang\textsuperscript{4} \qquad
   Yisen Wang\textsuperscript{3,5}\thanks{Corresponding Author: Yisen Wang (yisen.wang@pku.edu.cn).}\quad \\ 
\textsuperscript{1} College of Engineering, Peking University\\
\textsuperscript{2} School of EECS, Peking University \\  
\textsuperscript{3} State Key Lab of General Artificial Intelligence,\\ \hspace{0.14cm} School of Intelligence Science and Technology, Peking University\\
\textsuperscript{4} MIT CSAIL\\
\textsuperscript{5} Institute for Artificial Intelligence, Peking University\\
}
\newcommand{\add}{\textcolor{black}}
\newenvironment{proofsketch}{\begin{proof}[Proof Sketch]}{\end{proof}}
\begin{document}

\maketitle

\begin{abstract}
Recently, contrastive learning has risen to be a promising paradigm for extracting meaningful data representations. Among various special designs, adding a projection head on top of the encoder during training and removing it for downstream tasks has proven to significantly enhance the performance of contrastive learning. However, despite its empirical success, the underlying mechanism of the projection head remains under-explored. In this paper, we develop an in-depth theoretical understanding of the projection head from the information-theoretic perspective. By establishing the theoretical guarantees on the downstream performance of the features before the projector, we reveal that an effective projector should act as an information bottleneck, filtering out the information irrelevant to the contrastive objective. Based on theoretical insights, we introduce modifications to projectors with training and structural regularizations. Empirically, our methods exhibit consistent improvement in the downstream performance across various real-world datasets, including CIFAR-10, CIFAR-100, and ImageNet-100. We believe our theoretical understanding on the role of the projection head will inspire more principled and advanced designs in this field.  Code is available at \url{https://github.com/PKU-ML/Projector_Theory}.
 
\end{abstract}

\section{Introduction}

In recent years, contrastive learning has emerged as a promising representation learning paradigm and exhibited impressive performance without supervised labels \citep{chen2020simple,he2020momentum,zbontar2021barlow}. The core idea of contrastive learning is quite simple, that is to pull the augmented views of the same samples (i.e., positive samples) together while pushing the independent samples (i.e., negative samples) away. To improve the downstream performance of contrastive learning, researchers have proposed various special training objectives and architecture designs \citep{grill2020bootstrap,wang2021residual,guo2023contranorm,wang2023message,wang2024nonnegative,du2024on}. Among them, one of the most widely-used techniques is the projection head (i.e., projector) \citep{chen2020simple}, which is a shallow layer following the backbone during pretraining and is discarded in downstream tasks like image classification and object detection. It has been shown that the features before the projector (denoted as encoder features) exhibit much better downstream performance than the features after the projector (denoted as projector features) across various applications \citep{jing2021understanding,gupta2022understanding}. Inspired by the success of the projection head in contrastive learning, researchers also extend this architecture to other representation learning paradigms and achieve significant improvements \citep{sariyildiz2022no,zhou2021ibot}. However, although the projection head has been widely adopted, the understanding of the underlying mechanism behind it is still quite limited. In this paper, we aim to establish a theoretical analysis of the relationship between the projection head and the downstream performance of contrastive learning.

Among various theoretical understandings of contrastive learning \citep{arora2019theoretical,oord2018representation,haochen2021provable,wang2022chaos,cui2023rethinking}, information theory provides a useful framework for analyzing its downstream performance \citep{oord2018representation,tian2020makes}. Specifically, as shown in Figure \ref{fig:1}, the overall information flow in contrastive learning transmits from $Y$ to $R$, where $Y$ represents the natural semantics that
generates input samples $X$ (e.g., the supervised labels), $Z_1$ (encoder features) and $Z_2$ (projector features) respectively denote the features before and after the projector, and $R$ represents the self-supervised targets in contrastive learning. 
It has been proven that neural networks can capture most of the essential information needed for downstream tasks by minimizing the contrastive objective, i.e., maximizing the mutual information $I(Z_2;R)$ leads to maximizing $I(Z_2;Y)$ \citep{tian2020makes}. However, a crucial question remains: \textit{what is the influence of discarding the projection head in downstream tasks? }

\begin{figure}[t]
\centering
\includegraphics[width=0.8\textwidth]{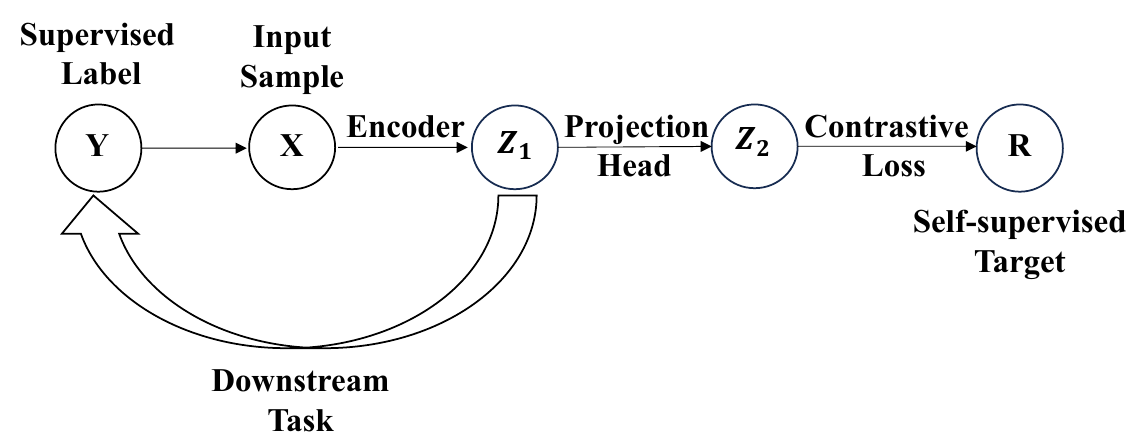}
\caption{The general construction of contrastive learning can be displayed as the information flow model above, where $Y$ denote the ground-truth labels in downstream tasks, $X$ are the input samples, $Z_1, Z_2$ denote the encoder and projector features, and $R$ represent the self-supervised targets. 
}
\label{fig:1}
\end{figure}

By establishing theoretical guarantees, we identify some critical factors affecting the downstream performance of the encoder features. Specifically, the downstream utility of encoder features, quantified by $I(Z_1;Y)$, improves as the mutual information between encoder features and projector features $I(Z_1;Z_2)$ decreases while the mutual information between encoder features and the contrastive objective $I(Z_1;R)$ increases. To balance these two factors, our theoretical analysis suggests an essential design principle for the projector: it should function as an information bottleneck, selectively filtering out information that is irrelevant to the contrastive objective. 

Based on the theoretical principle, we propose two approaches to optimize the designs of projectors, i.e., modifying the training objective and adjusting the projector architectures. To be specific, for controlling the mutual information between encoder and projector features ($I(Z_1;Z_2)$) with new training objectives, we estimate the mutual information with the surrogate metric proposed by \citet{tan2023information} and incorporate it as a regularization term. For structural adjustments, we decrease the mutual information between encoder and projector features ($I(Z_1;Z_2)$) by promoting the sparsity of the projector outputs with discretization \citep{liu2002discretization} and sparse autoencoder \citep{ng2011sparse} architectures. With the proposed improvements on the projectors, our methods exhibit significantly improved downstream performance across various real-world datasets and contrastive frameworks. Our contributions are summarized as follows:
\begin{itemize}

    \item We develop a new theoretical understanding for the role of the projection head in contrastive learning from the information-theoretic perspective. We establish both lower and upper bounds for the downstream performance of features before the projector. 
    \item Our findings indicate theoretical principles for designing an effective projection head: it should act as an information bottleneck, filtering out the irrelevant information and preserving the essential information for the contrastive objective.
    \item Based on theoretical principles, we propose two categories of methods to improve projector design, namely training regularization and structural regularization. Various experiments on different datasets and contrastive methods demonstrate that our modified projectors significantly improve the downstream performance of contrastive learning.
    
\end{itemize}

\section{Related Work}

\textbf{Contrastive Learning.} Recent advancements in contrastive learning algorithms have revolutionized the field of representation learning by leveraging the principle of instance discrimination. Representative algorithms such as SimCLR \citep{chen2020simple} and MoCo \citep{he2020momentum} have demonstrated the power of contrastive approaches by maximizing alignment between differently augmented views of the same data and minimizing similarity between different instances. The following frameworks including BYOL \citep{grill2020bootstrap}, Barlow Twins \citep{zbontar2021barlow}, SimSiam \citep{chen2021exploring} extend contrastive learning with different objectives, structures, and empirical tricks. Among various designs in contrastive learning, an essential component is the projection head, which employs a multi-layer perceptron (MLP) on top of the backbone network during the pretraining process and discards it in the downstream task. The projection head has been found to be crucial in improving the quality of learned representations. The features before the projection head exhibit benefits in generalization \citep{chen2020simple}, robustness \citep{gupta2022understanding}, and avoiding dimensional collapse \citep{jing2021understanding}. Due to the impressive performance of the projection head, researchers extend this structure to different self-supervised paradigms \citep{zhou2021ibot,zhang2022mask,yue2023understanding,zhang2023on}, and even to supervised learning \citep{sariyildiz2022no}, leading to significant improvements across various scenarios. However, although the projection head has achieved impressive empirical success, the theoretical understanding remains quite limited.

\textbf{Theory of Contrastive Learning.}  As contrastive learning algorithms have achieved remarkable empirical success across various downstream applications, researchers try to understand the mechanisms behind them from different theoretical perspectives. \citet{wang2020understanding} understand contrastive objectives by reformulating the InfoNCE loss as two terms: the alignment of positive pairs and the uniformity of negative samples. \citet{arora2019theoretical} establish the first theoretical guarantee between the contrastive loss and the downstream classification loss. \citet{haochen2021provable} evaluate the downstream performance of the optimal solution in contrastive learning by connecting the contrastive objective to matrix factorization. \add{\citet{saunshi2022understanding} point out that understanding contrastive learning methods requires incorporating inductive biases.} Besides, several works discuss the selection of positive samples \citep{tian2020makes}, introduce new training objectives \citep{bardes2021vicreg}, and analyze the learning dynamics \citep{tan2023information,wang2023message}. \add{Additionally, for analyzing the special architectures in contrastive learning, previous works establish theoretical analysis of the prediction head in contrastive learning without negative samples \citep{wen2023mechanism,zhuo2023towards}, investigate the effect of the projection head in both supervised and self-supervised learning \citep{xue2024investigating} and apply the concepts of expansion and shrinkage to explain the effectiveness of projection head \citep{gui2023unraveling}. }

\section{An Information-theoretic Analysis on the Role of Projectors}
\label{theoretical analysis}

Among various theoretical frameworks for contrastive learning \citep{arora2019theoretical,haochen2021provable,oord2018representation,wang2022chaos}, the information-theoretic perspective is popular and offers many insightful understandings \citep{tian2020makes,tan2023information}. However, previous analyses usually overlook the existence of the projection head and discuss the downstream performance of projector features, which leads to an obvious gap between theory and practice. In this paper, we still use the information-theoretic perspective, but focus on providing a theoretical understanding of encoder features. 

In this section, we start with an introduction to the information flow in contrastive learning to lay a foundation for our analysis. After that, we establish the theoretical guarantees on the downstream performance of the features before the projector, which characterize the role of the projection head in contrastive learning and indicate the principle for designing an effective projector.

\subsection{Problem Setups}

\textbf{Information-theoretic Formulation.}
As shown in Figure \ref{fig:1}, the overall information flow in contrastive learning transmits from $Y$ to $R$. To elaborate, $Y$ represents the natural semantics that generates input samples $X$ (e.g., the supervised labels), $Z_1$ and $Z_2$ are the encoder features and projector features, and $R$ are the self-supervised targets in contrastive learning (positive samples in contrastive learning are pulled together while negative samples are pushed away). {\color{black} Following \citet{haochen2021provable}, we assume that the set of $X$ is a finite but exponentially large set for the simplicity of analysis}. During the contrastive pretraining process, the backbone network encodes the input samples and obtains the encoder features $Z_1$. After that, the projector is applied following the encoder and we obtain the projector features $Z_2$. Eventually, we calculate the contrastive objective on the projector features. \textbf{Consequently, our information flow model can be summed up as $Y\rightarrow X\rightarrow Z_1\rightarrow Z_2\rightarrow R$.}

\textbf{Mutual Information Definitions.} We then introduce the basic concept of mutual information. We adopt Shannon information, where the entropy of a random variable $X$ is defined as $H(X)=-\mathbb{E}_{P(X)} \log{P(X)}$, and the mutual information between two variables $X_1$ and $X_2$ is $I(X_1;X_2)=H(X_1)-H(X_1|X_2)=H(X_2)-H(X_2|X_1)$. Intuitively, $I(X_1;X_2)$ conveys the reduction in the entropy of $X_1$ when the information about $X_2$ is given. When $I(X_1;X_2)=0$, $X_1$ and $X_2$ are independent variables, and when they are closely related, their mutual information tends to be large.

\textbf{Mutual Information Dynamics in Contrastive Learning.}
We now utilize the concept of mutual information to revisit the contrastive learning process. During pretraining, we train the projector feature with infoNCE loss, aligning similar objects while distancing dissimilar ones, which increases the mutual information $I(Z_2;R)$ \citep{oord2018representation}. It has been proved that minimizing the contrastive objectives enables the neural network to obtain essential information for downstream tasks (e.g., the supervised labels in classification tasks), i.e., $I(Z_2;Y)$ increases with the larger $I(Z_2;R)$ \citep{tian2020makes}. However, after pretraining is completed, the common contrastive pipeline discards the projection head and uses the encoder feature $Z_1$ for downstream tasks, which suggests that we should focus on establishing the theoretical guarantees of $I(Z_1; Y)$.

\subsection{Theoretical Guarantees of Downstream Task Performance}
\label{lower bound}

In this section, we will first theoretically derive a lower bound of \( I(Y; Z_1) \), which provides a new understanding of how pretraining the projection head can benefit downstream tasks. 
\begin{theorem}
\label{theorem-1}
The downstream task performance of encoder features can be lower-bounded by
\[
I(Y;Z_1)\geq I(Z_1;R)-I(Z_1;Z_2)+I(R;Y).
\]
\begin{proofsketch}
We provide a proof sketch of this lower bound as it is the core of our theoretical analysis. We first utilize the unidirectional information flow model to illustrate that $Y$ and $R$ are conditionally independent when given the encoder feature, i.e., $I(Y;R|Z_1)=0$. Besides, we observe that $Z_1$ learns less from $R$ than from $Z_2$, i.e., $I(Z_1;R) \leq I(Z_1;Z_2)$. With these two observations, we derive the lower bound. 
\begin{align*}
I(Y;Z_1) &= I(Y;Z_1)+I(Y;R|Z_1) \tag{$I(Y;R|Z_1)=0$} \\
&=I(Y;Z_1,R) \\
&=H(Z_1;R)-H(Z_1,R|Y) \\
&\geq H(Z_1,Z_2)-H(Z_1,R|Y)+H(R)-H(Z_2) \tag{$I(Z_1;R) \leq I(Z_1;Z_2)$} \\
&=H(Z_1,Z_2)-H(Z_1|R,Y)-H(R|Y)+H(R)-H(Z_2) \\
&\geq H(Z_1,Z_2)-H(Z_1|R)-H(R|Y)+H(R)-H(Z_2) \tag{$H(Z_1|R) \geq H(Z_1|R,Y)$} \\
&=H(Z_1|Z_2)-H(Z_1|R)+I(R;Y) \\
&=I(Z_1;R)-I(Z_1;Z_2)+I(R;Y).
\end{align*}
\end{proofsketch}
\end{theorem}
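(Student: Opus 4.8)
The plan is to lean entirely on the Markov structure $Y \to X \to Z_1 \to Z_2 \to R$, reducing the statement to two structural facts and then pushing everything through with standard entropy identities. First I would establish the two facts that the whole argument rests on. The conditional independence $I(Y;R \mid Z_1) = 0$ follows because $Y$ and $R$ sit on opposite sides of $Z_1$ in the chain: since $Y \to Z_1 \to R$ is Markov, $R$ is independent of $Y$ once $Z_1$ is fixed. The inequality $I(Z_1;R) \le I(Z_1;Z_2)$ is exactly the data processing inequality applied to the segment $Z_1 \to Z_2 \to R$, which says that turning $Z_2$ into $R$ by post-processing cannot increase the information $Z_1$ shares with it. Both claims require only the Markov property together with the finiteness of the support of $X$ (and hence of the downstream variables), which guarantees that every entropy below is finite and the algebra is legitimate.

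With these in hand, the second step is to smuggle $R$ into the target quantity for free. By the chain rule for mutual information, $I(Y;Z_1,R) = I(Y;Z_1) + I(Y;R \mid Z_1)$, and the vanishing conditional term collapses this to $I(Y;Z_1) = I(Y;Z_1,R)$. Expanding the joint term as $I(Y;Z_1,R) = H(Z_1,R) - H(Z_1,R \mid Y)$ then produces a form I can attack with the data processing inequality.

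The third step is the entropy bookkeeping. I would rewrite $I(Z_1;R) \le I(Z_1;Z_2)$ in joint-entropy form as $H(Z_1,R) \ge H(Z_1,Z_2) + H(R) - H(Z_2)$ and substitute this lower bound for the $H(Z_1,R)$ term, which preserves the $\ge$ direction since that term enters with a positive sign. Next I split $H(Z_1,R \mid Y) = H(R \mid Y) + H(Z_1 \mid R,Y)$ by the chain rule and weaken $H(Z_1 \mid R,Y) \le H(Z_1 \mid R)$ using the fact that conditioning never increases entropy. Collecting terms, $H(Z_1,Z_2) - H(Z_2) = H(Z_1 \mid Z_2)$ and $H(R) - H(R \mid Y) = I(R;Y)$, leaving $I(Y;Z_1) \ge H(Z_1 \mid Z_2) - H(Z_1 \mid R) + I(R;Y)$; finally writing each conditional entropy as $H(Z_1)$ minus a mutual information converts $H(Z_1 \mid Z_2) - H(Z_1 \mid R)$ into $I(Z_1;R) - I(Z_1;Z_2)$ and yields the claim.

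I expect the main obstacle to be conceptual rather than computational: pinning down exactly which conditional independences the information flow model licenses, and confirming that the direction of every inequality survives the substitutions. In particular I must invoke the data processing inequality on the correct segment ($Z_1 \to Z_2 \to R$ rather than the reverse) so the sign comes out right, and I must check that the two places where I weaken an equality — the joint-entropy substitution and dropping the $Y$-conditioning — both move the bound in the $\ge$ direction. Once those structural facts and signs are fixed, the remaining manipulations are routine identities, so essentially all the risk lives in the setup.
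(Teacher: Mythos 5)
Your proposal is correct and follows essentially the same route as the paper's own proof: the same two structural facts ($I(Y;R\mid Z_1)=0$ from the Markov chain and $I(Z_1;R)\le I(Z_1;Z_2)$ from the data processing inequality on $Z_1\to Z_2\to R$), the same expansion $I(Y;Z_1)=I(Y;Z_1,R)=H(Z_1,R)-H(Z_1,R\mid Y)$, and the same joint-entropy substitution, chain-rule split of $H(Z_1,R\mid Y)$, and conditioning step $H(Z_1\mid R,Y)\le H(Z_1\mid R)$ before collecting terms. All signs and substitutions check out, so no gaps remain.
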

\vspace{-0.2in}
According to Theorem \ref{theorem-1}, the lower bound comprises three components. As the mutual information between the ground-truth labels and self-supervised targets ($I(R;Y)$) can be regarded as a constant in contrastive learning \citep{arora2019theoretical}, we focus on the first two terms, i.e., $I(Z_1; R)$ and $I(Z_1; Z_2)$.

\textbf{The Projector Needs to Preserve Sufficient Information Related to Contrastive Objectives.}
The first term, $I(Z_1; R)$, measures the amount of information that the encoder feature acquires from the contrastive learning objective. To improve the lower bound of downstream task performance, the encoder feature need to share sufficient knowledge of the self-supervised target $R$. As the contrastive objective is calculated on the projector features, this term implies that the projector should avoid forgetting the essential information, which is consistent with the practical design where the projector is usually a shallow architecture. 

\textbf{The Projector Needs to Filter out Unrelated Information.} As for the second term, the theorem states that the downstream performance improves with less mutual information between encoder and projector features. The result indicates the necessity of the projection head. Without the projector, this term achieves the maximum, which indicates the downstream performance significantly declines. In practice, several studies improve the downstream performance of contrastive learning by implicitly decreasing the mutual information between encoder and projector features. For example, \citet{jing2021understanding} discard part of the projector features when calculating the contrastive loss and \citet{lavoie2022simplicial} project the encoder features into a simplicial representation space.

Combining two terms, the theorem delivers a principle for designing a projector: it should act as an information bottleneck \citep{tishby2000information} that filters out the irrelevant information of the contrastive objective. 

Furthermore, we also provide an upper bound of $I(Y;Z_1)$ to analyze the influence of the projection head. 

\begin{theorem}
\label{theorem-2}
The downstream task performance of encoder features can be upper bounded by
\[
I(Y;Z_1) \leq  I(Y;Z_2)-I(Z_1;Z_2)+H(Z_1).
\]
\begin{proofsketch}
We provide a proof sketch of this upper bound. Analogous to the proof of the lower bound, we observe that $Y$ and $Z_2$ are conditionally independent wen given the encoder feature, i.e., $I(Y;Z_2|Z_1)=0$. We then obtain the upper bound with some simple deduction. 
\begin{align*}
I(Y;Z_1)&=I(Y;Z_1,Z_2) \tag{$I(Y;Z_2|Z_1)=0$}\\
&=I(Y;Z_1|Z_2)+I(Y;Z_2) \\
&\leq H(Z_1|Z_2)+I(Y;Z_2) \\
&=I(Y;Z_2)+H(Z_1)-I(Z_1;Z_2).
\end{align*}
\end{proofsketch}
\end{theorem}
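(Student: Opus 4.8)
The plan is to reuse the Markov structure of the information flow $Y \to X \to Z_1 \to Z_2 \to R$ that already drives the lower bound in Theorem~\ref{theorem-1}, but now read off the sub-chain $Y \to Z_1 \to Z_2$ rather than $Y \to Z_1 \to R$. The first step is to establish the conditional independence $I(Y;Z_2 \mid Z_1)=0$. Since $Z_2$ is a (possibly stochastic) function of $Z_1$ alone, conditioning on $Z_1$ renders $Y$ and $Z_2$ independent; formally this is the data-processing property of the sub-chain $Y \to Z_1 \to Z_2$, exactly analogous to the $I(Y;R\mid Z_1)=0$ identity invoked for the lower bound.

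Given this, I would expand the joint mutual information $I(Y;Z_1,Z_2)$ by the chain rule in two different orders. Conditioning on $Z_1$ first gives $I(Y;Z_1,Z_2) = I(Y;Z_1) + I(Y;Z_2\mid Z_1) = I(Y;Z_1)$, using the conditional independence just established. Conditioning on $Z_2$ first gives $I(Y;Z_1,Z_2) = I(Y;Z_2) + I(Y;Z_1\mid Z_2)$. Equating the two decompositions yields the exact identity $I(Y;Z_1) = I(Y;Z_2) + I(Y;Z_1\mid Z_2)$.

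The remaining step is to control the conditional term. Since mutual information is at most the entropy of either argument, $I(Y;Z_1\mid Z_2) \le H(Z_1\mid Z_2)$, and the standard identity $H(Z_1\mid Z_2) = H(Z_1) - I(Z_1;Z_2)$ rewrites this bound as $H(Z_1) - I(Z_1;Z_2)$. Substituting back gives $I(Y;Z_1) \le I(Y;Z_2) + H(Z_1) - I(Z_1;Z_2)$, which is precisely the claimed upper bound.

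I expect the only genuine subtlety to be the rigorous justification of $I(Y;Z_2\mid Z_1)=0$ from the information-flow diagram: one must argue that the diagram really encodes a Markov chain, so that $Z_2$ is conditionally independent of $Y$ given $Z_1$, rather than merely a suggestive picture. The two entropy manipulations are routine, and dropping the non-negative term $I(Y;Z_1\mid Z_2)$'s excess over the entropy bound is immediate, so the conceptual weight rests entirely on reading the correct conditional-independence relation off the flow model.
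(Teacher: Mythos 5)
Your proposal is correct and follows essentially the same route as the paper's proof: both rest on the conditional independence $I(Y;Z_2\mid Z_1)=0$ from the Markov chain, expand $I(Y;Z_1,Z_2)$ by the chain rule to get $I(Y;Z_1)=I(Y;Z_2)+I(Y;Z_1\mid Z_2)$, and then bound $I(Y;Z_1\mid Z_2)\leq H(Z_1\mid Z_2)=H(Z_1)-I(Z_1;Z_2)$. Your explicit note that the conditional independence must be rigorously read off the Markov structure matches the paper's Lemma~\ref{lem-1}, which justifies it via $P(R,Y\mid Z_1)=P(Y\mid Z_1)P(R\mid Z_1)$.
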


\vspace{-0.2in}
Theorem \ref{theorem-2} indicates that the downstream performance gap between the encoder features and projector features is decided by two terms, i.e., $I(Z_1; Z_2)$ and $H(Z_1)$. Similar to the lower bound of the downstream performance, the first term $I(Z_1; Z_2)$ implies that an effective projector should decrease the mutual information between encoder and projector features. As for the second term ($H(Z_1)$), the bound implies that the encoder features with better downstream performance should be more uniform in the representation space, which is consistent with the empirical findings \citep{ma2023deciphering}. The upper bound further verifies the principle for designing an effective projector, i.e., the projector should be able to filter information unrelated to contrastive objectives. 

In the following, we first evaluate the effectiveness of the theoretical analysis with existing projectors and then propose several improvements based on the theoretical principle.

\subsection{Empirical Verifications on Theoretical Guarantees} 
\label{sec: validation experiments}

In this section, we will justify our theoretical guarantees by validation experiments. Due to the difficulty in computing mutual information with high precision, we adopt the matrix mutual information as a surrogate metric \citep{tan2023information}. We introduce their definitions of matrix entropy and matrix mutual information as follows. 

\begin{definition}[Matrix-based $\alpha$-order (R\'enyi) entropy]
Suppose matrix $A \in \mathbf{R}^{n \times n}$ which $A(i,i)=1$ for every $i=1,2,...n$. $\alpha$ is a positive real number. The $\alpha$-order (R\'enyi) entropy for matrix $A$ is defined as follows:
\[
H_\alpha(A)=\frac{1}{1-\alpha}\log\left[\text{tr}\left(\left(\frac{A}{n}\right)^\alpha\right)\right],
\]
where we use $\alpha=2$ as the default value. 
\label{matrix entropy}
\end{definition}

\begin{definition}[Matrix-based mutual information]
Suppose matrix $A,B \in \mathbf{R}^{n \times n}$ which $A(i,i)=B(i,i)=1$ for every $i=1,2,...n$. $\alpha$ is a positive real number.
The $\alpha$-order (R\'enyi) mutual information between $A$ and $B$ is defined as follows:
\[
I_\alpha(A;B)=H_\alpha(A)+H_\alpha(B)-H_\alpha(A \odot B),
\]
where $\odot$ represents the Hadamard product of two matrices. 
\label{matrix mutual information}
\end{definition}

Following \citet{tan2023information}, we respectively estimate the mutual information $I(Z_1;Z_2)$ and entropy $H(Z_1)$ with the matrix surrogate metric $I_\alpha(\hat{Z}_1 \hat{Z}_1^\top;\hat{Z}_2\hat{Z}_2^\top)$ and $H_\alpha(\hat{Z}_1 \hat{Z}_1^\top)$, where $\hat{Z_1}$, $\hat{Z_2}$ are the normalized encoder and projector feature matrices of the samples. As for estimating $I(Z_1;R)$, we calculate the contrastive loss of the encoder features as a surrogate metric \citep{oord2018representation}.

\begin{figure}
    \centering
    \begin{subfigure}{0.49\textwidth}
    \includegraphics[width=\textwidth, height=5cm]{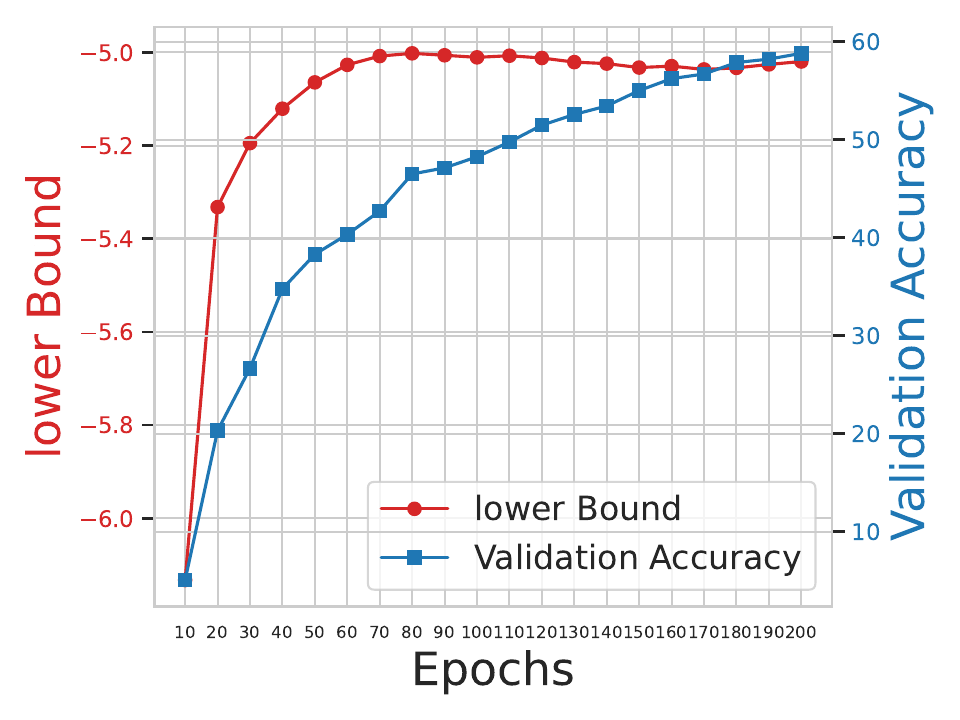}
        \caption{Estimated Lower Bounds}
        \label{fig:2}
    \end{subfigure}
    \hfill
    \begin{subfigure}{0.49\textwidth}
    \includegraphics[width=\textwidth, height=5cm]{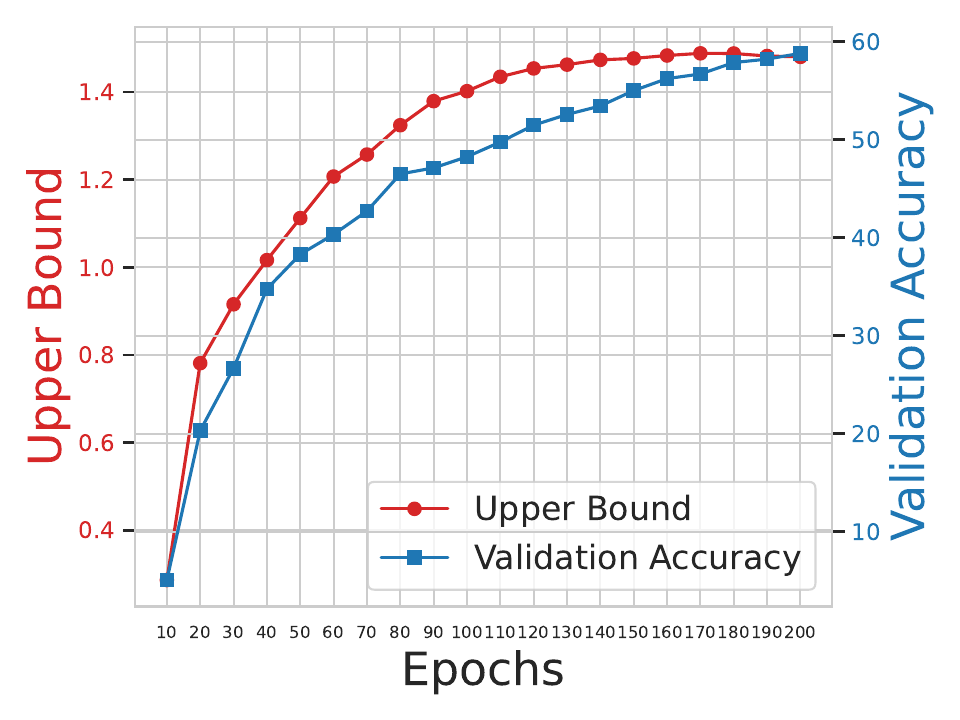}
        \caption{Estimated Upper Bounds}
        \label{fig:3}
    \end{subfigure}
    \caption{The change processes of estimated guarantees and practical downstream performance of encoder features trained with SimCLR on CIFAR-100 {for 200 epochs}. The trend indicates that the bounds provide a fairly accurate estimation of the variations in downstream performance.}
    \label{fig:trend}
\end{figure}

\textbf{Tendency during Training.} We start with observing the changing tendency of estimated theoretical bounds during the pretraining process of contrastive learning. In practice, we take the widely-used contrastive method SimCLR \citep{chen2020simple} as an example and train the ResNet-18 on CIFAR-100. As shown in Figure \ref{fig:trend}, both lower and upper bounds continuously increase during training. Meanwhile, the rate of increase is rapid initially and gradually slows down, which is consistent with the tendency of the downstream task performance. The empirical results indicate that the estimated theoretical bounds accurately characterize the change process of the downstream performance with encoder features.

\textbf{Correlation between Bounds and Downstream Accuracy.} We then investigate the correlation between the theoretical guarantees and downstream accuracy with different projectors. Specifically, we compare projectors with different depths, different widths, {different training parameters}, and different implementations. As shown in Figure \ref{fig:correlation}, both the estimated lower and upper bounds exhibit strong correlations to the practical downstream accuracy, which further verifies the effectiveness of our theoretical estimation on the downstream performance of encoder features. More empirical details can be found in Appendix \ref{empirical verification}.

\begin{figure}[t]
    \centering
    \begin{subfigure}{0.24\textwidth}
    \includegraphics[width=1\textwidth,height=2.5cm]{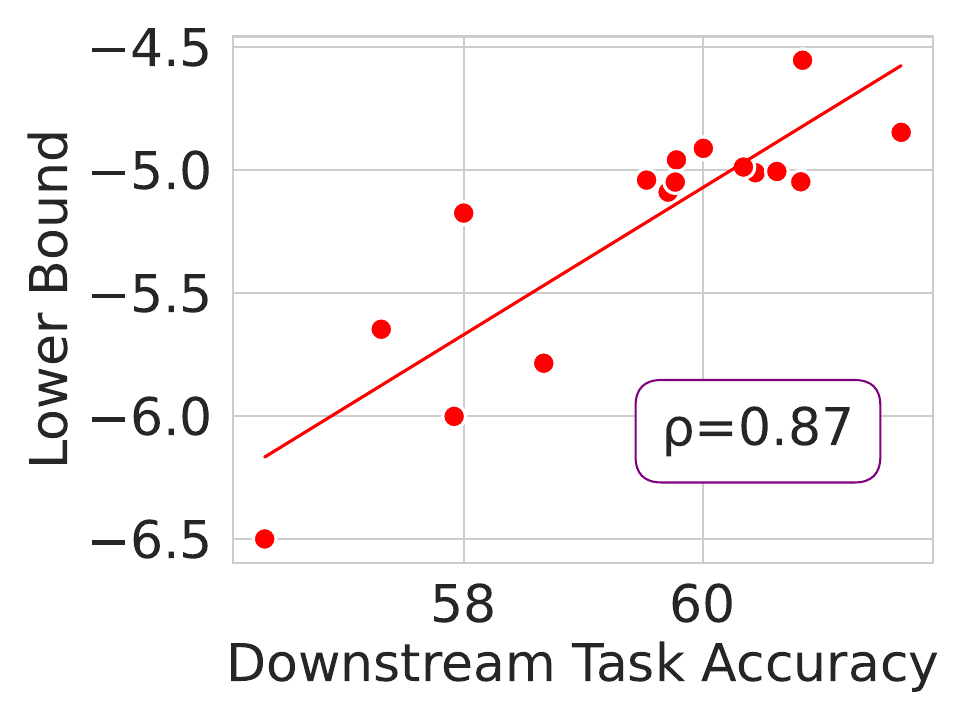}
        \caption{Lower Bound of SimCLR on CIFAR-100}
        \label{fig:4}
    \end{subfigure}
    \hfill
    \begin{subfigure}{0.24\textwidth}
    \includegraphics[width=1\textwidth,height=2.5cm]{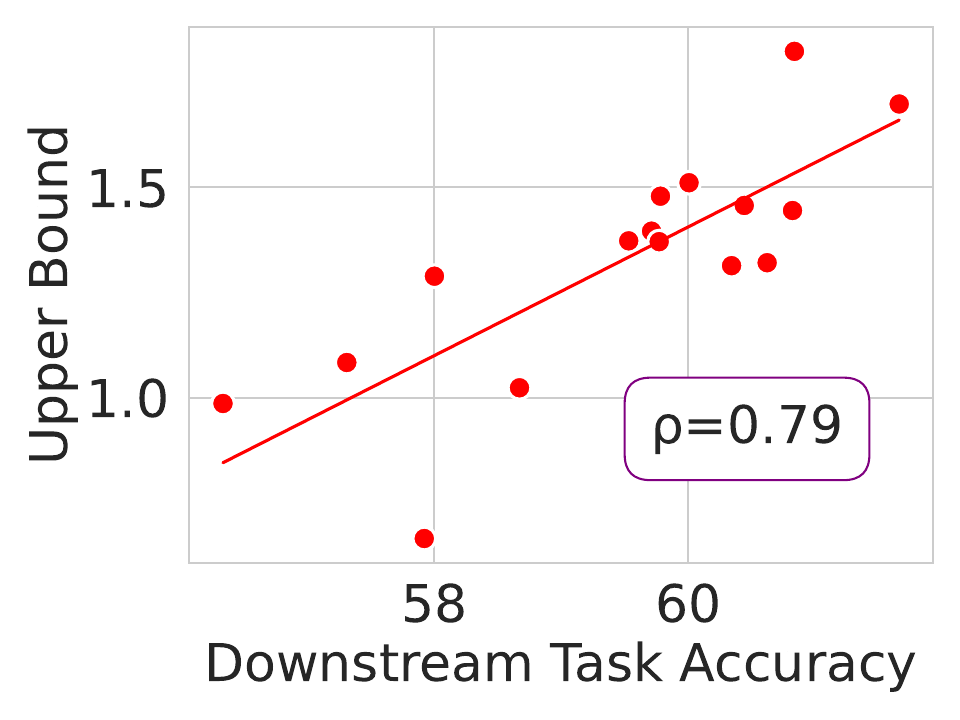}
        \caption{Upper Bound of SimCLR on CIFAR-100}
        \label{fig:5}
    \end{subfigure}
    \hfill
    \begin{subfigure}{0.24\textwidth}
    \includegraphics[width=1\textwidth]{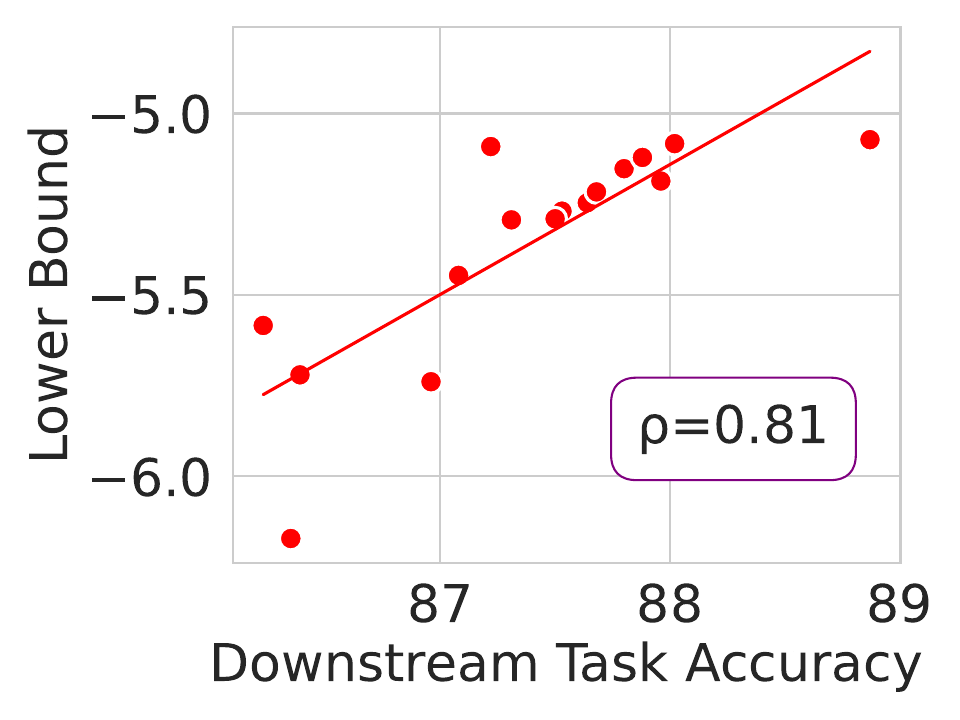}
        \caption{Lower Bound of SimCLR on CIFAR-10}
        \label{fig:6}
    \end{subfigure}
    \hfill
    \begin{subfigure}{0.24\textwidth}
    \includegraphics[width=1\textwidth]{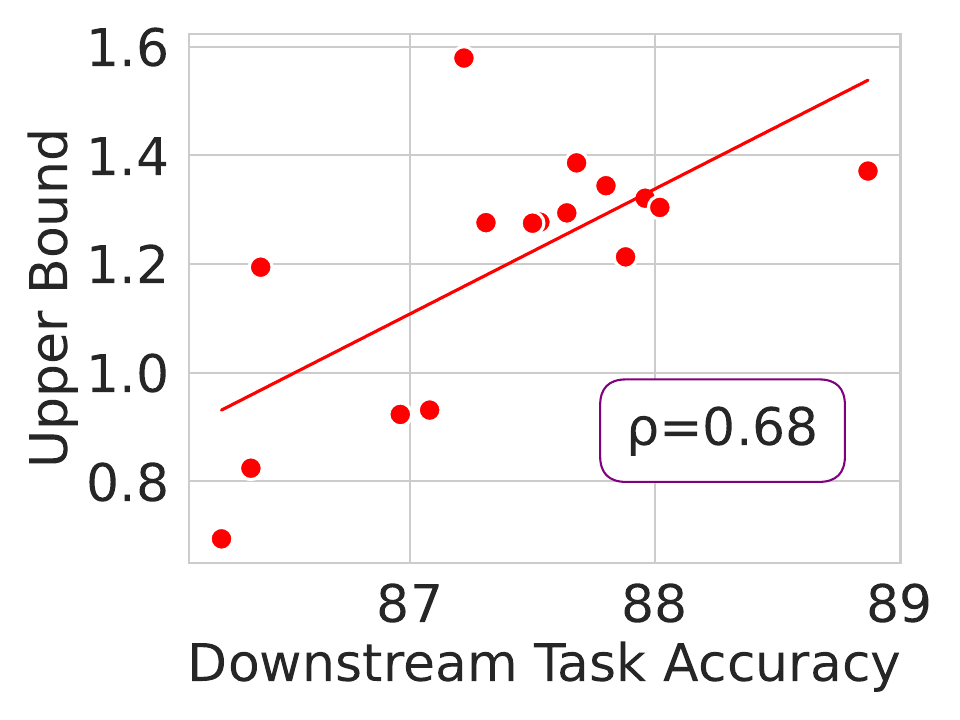}
        \caption{Upper Bound of SimCLR on CIFAR-10}
        \label{fig:7}
    \end{subfigure}
    \caption{Correlation between downstream task accuracy and the estimated theoretical bounds on CIFAR-10 and CIFAR-100. Different points represent the encoder features learned by SimCLR with different projectors.}
    \label{fig:correlation}
\end{figure}

\section{New Projectors with Principled Modifications}
\label{methods}
The theoretical analysis in Section \ref{theoretical analysis} indicates that an effective projection head should be an information bottleneck and filter out the information irrelevant to contrastive objectives. Built on this principle, we optimize the designs of projection heads through two approaches: training regularization and structural regularization. With modified projectors, we evaluate the performance of contrastive learning in various real-world datasets.

\subsection{Training Regularization}

To control the mutual information between encoder and projector features, a straightforward method is to estimate the mutual information and apply it as a regularization term to the contrastive loss. As the accurate estimation of mutual information is not accessible, we still adopt the matrix mutual information introduced in Section \ref{sec: validation experiments} as the surrogate metric and denote it as the bottleneck regularizer:
\begin{equation}
 \mathcal{L}_{reg}=I_\alpha(\hat{Z}_1 \hat{Z}_1^\top;\hat{Z}_2 \hat{Z}_2^\top).  
\end{equation}
Taking the widely-used InfoNCE loss as an example, the modified objective is:
\begin{equation}
\mathcal{L}_{total}=-\mathbb{E}_{x,x^+,\{x^-\}_{i=1}^n} \log\frac{\exp (f(x)^\top f(x^+))}{\exp(f(x)^\top f(x^+))+\sum_{i=1}^n \exp(f(x)^\top f(x^-))} + \lambda \cdot \mathcal{L}_{reg},
\end{equation}
where $(x,x^+)$, $(x,x^-)$ are the positive and negative pairs in contrastive learning, $\lambda$ is a hyper-parameter to control the scale of regularization.

\textbf{Setup.}
We conduct our experiments on CIFAR-10, CIFAR-100, and ImageNet-100, with Resnet-18 as our backbone. We evaluate the classification based on two representative contrastive frameworks, i.e., SimCLR \citep{chen2020simple} and Barlow Twins \citep{zbontar2021barlow}. On CIFAR-10 and CIFAR-100, we train the model for 200 epochs with batch size 256 and weight decay $10^{-4}$. The coefficient of the regularization term ($\lambda$) is set to 0.0001. On ImageNet-100, we train the model for 100 epochs with batch size 128 and weight decay $10^{-4}$. The coefficient of the regularization term is set to 0.01. More details can be found in Appendix \ref{training regularization}.

\textbf{Empirical Results.}
In Table \ref{tab:regularization}, we compare the performance of the original InfoNCE loss with ours across different benchmarks. Under the SimCLR framework, we find that the classification accuracy increases by 0.26\% on CIFAR-10, 1.26\% on CIFAR-100, and 1.06\% on ImageNet-100 with the regularization term. Meanwhile, under the Barlow Twins framework, the classification accuracy increases by 0.32\% on CIFAR-10, 2.15\% on CIFAR-100, and 1.34\% on ImageNet-100. The experimental results consistently suggest that adding the bottleneck regularizer to control the mutual information between encoder and projector features can effectively improve the downstream performance across different datasets and contrastive frameworks.

\subsection{Structural Regularization}
In addition to applying a regularizer to training objectives, we also explore controlling the mutual information between encoder and projector features by modifying the projector structures. Specifically, we use two common structures that can control the amount of preserved information: feature discretization \citep {liu2002discretization} and sparse autoencoder \citep{ng2011sparse}.

\begin{table}\centering
\caption{Linear evaluation accuracy (\%) of contrastive learning representations learned by the objectives with/without regularizations on the mutual information between encoder and projector features.}
    \begin{tabular}{llccc}
        \toprule 
    Dataset & Framework & Baseline  & Bottleneck Regularizer  & Gains \\ \midrule
    \multirow{2}{*}{CIFAR-10} & SimCLR & 87.47 & 87.73 & +0.26 \\
     & Barlow Twins & 89.00 & 89.32 & +0.32 \\ \midrule
    \multirow{2}{*}{CIFAR-100} & SimCLR & 58.12 & 59.38 & +1.26 \\
     & Barlow Twins & 64.16 & 66.31 & +2.15 \\ \midrule
    \multirow{2}{*}{ImageNet-100} & SimCLR & 67.36 & 68.42 & +1.06 \\
     & Barlow Twins & 69.96 & 71.3 & +1.34 \\ 
    \bottomrule
    \end{tabular}
     \label{tab:regularization}
\end{table}

\subsubsection{Discretized Projector}
As a well-known technique to remove irrelevant information \citep{kotsiantis2006discretization}, discretization converts continuous features to discrete points in the representation space, where the number of discrete points determines how much information is retained.  For simplicity, we follow the implementation of discretization proposed by \citet{mentzer2023finite} and adopt the Finite Scalar Quantization structure (FSQ). Specifically, we discrete each dimension of projector features as follows:
\begin{equation}
    \hat{f}(x)_i = \floor{g(f(x)_i) + \frac{1}{2}},
\end{equation}
where $f(x)_i$ is the $i$-th dimension of the original projector feature, $g(z) = \floor{L/2} \tanh(z).$ We note that the tanh function enables the projector features to satisfy that $\tanh (z_i) \in (-1,1)$. Consequently, $L$ decides the number of points in the discretization space, i.e., the mutual information between the encoder and projector features is controlled by $L$. When $L$ decreases, more information from the encoder features is discarded.

\begin{table}[t]\centering
\caption{Linear evaluation accuracy (\%) of contrastive learning methods with the original and discrete projectors on CIFAR-10, CIFAR-100 and ImageNet-100.}
    \begin{tabular}{llccc}
        \toprule 
    Dataset & Framework & Baseline & Discretized Projector & Gains \\ \midrule
    \multirow{2}{*}{CIFAR-10} & SimCLR & 87.47 & 88.5 & +1.03 \\
     & Barlow Twins & 89.00 & 89.38 & +0.38 \\ \midrule
    \multirow{2}{*}{CIFAR-100} & SimCLR & 58.12 & 60.16 & +2.04 \\
     & Barlow Twins & 64.16 & 65.18 & +1.02 \\ \midrule
    \multirow{2}{*}{ImageNet-100} & SimCLR & 67.36 & 68.38 & +1.02 \\
     & Barlow Twins & 69.96 & 71.68 & +1.72 \\ 
    \bottomrule
    \end{tabular}
    \label{tab:discretization}
\end{table}

\textbf{Setup.} Similar to the training regularization setting, we adopt SimCLR \citep{chen2020simple} and Barlow Twins \citep{zbontar2021barlow} as our baseline frameworks, and measure the results on CIFAR-10, CIFAR-100 and ImageNet-100. The parameters are consistent with those of the training regularization setting. For the discretized projector, we discrete each dimension of the projector feature to 30 points in SimCLR and 3 points in Barlow Twins. More details can be found in Appendix \ref{discretized projector}.

\textbf{Empirical Results.} We compare the performance of the original projectors with our discretized ones across different benchmarks. As displayed in Table \ref{tab:discretization}, the classification accuracy increases by 1.03\% on CIFAR-10, 2.04\% on CIFAR-100, and 1.02\% on ImageNet-100 with the discretized projector under the SimCLR framework. Meanwhile, under the Barlow Twins framework, the classification accuracy increases by 0.38\% on CIFAR-10, 1.02\% on CIFAR-100, and 1.71\% on ImageNet-100. The experimental results on these benchmarks indicate that employing the discretized projector can promote downstream performance across different datasets and contrastive frameworks. 

\textbf{Theoretical Understanding.}
Beyond empirical improvements, we also provide a theoretical understanding on the gains of discretized projectors. The following theorem demonstrates that the discretized projector indicates superior downstream performance of encoder features. The proofs can be found in Appendix \ref{sec: theory of discretized projectors}.
\begin{theorem}\label{theorem-3}
The downstream performance of encoder features can be lower-bounded by
\begin{equation}
I(Y;Z_1) \geq -H(Z_2)+I(Z_2;R) +I(R;Y).
\end{equation}
\end{theorem}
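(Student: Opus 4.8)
The plan is to derive Theorem~\ref{theorem-3} as a short consequence of the lower bound already established in Theorem~\ref{theorem-1}, combined with two elementary information-theoretic facts. Recall that Theorem~\ref{theorem-1} gives
\[
I(Y;Z_1) \geq I(Z_1;R) - I(Z_1;Z_2) + I(R;Y),
\]
so since the summand $I(R;Y)$ also appears in the target expression $-H(Z_2) + I(Z_2;R) + I(R;Y)$, the entire claim reduces to the single inequality
\[
I(Z_1;R) - I(Z_1;Z_2) \geq I(Z_2;R) - H(Z_2).
\]

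First I would bound the two pieces of this reduced inequality separately. For the first piece I would invoke the data processing inequality along the sub-chain $Z_1 \rightarrow Z_2 \rightarrow R$, a valid Markov chain inherited from the information flow model $Y\rightarrow X\rightarrow Z_1\rightarrow Z_2\rightarrow R$ assumed throughout Section~\ref{theoretical analysis}; this yields $I(Z_1;R) \geq I(Z_2;R)$. For the second piece I would use the universal bound that mutual information never exceeds the entropy of either argument, namely $-I(Z_1;Z_2) \geq -H(Z_2)$, which follows from $I(Z_1;Z_2) = H(Z_2) - H(Z_2\mid Z_1)$ and nonnegativity of conditional entropy. Adding these two inequalities produces exactly the reduced claim, and substituting back into Theorem~\ref{theorem-1} completes the argument.

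I would then add an interpretive remark tying the bound to discretization, together with a fully self-contained alternative route. Writing $-H(Z_2) + I(Z_2;R) = -H(Z_2\mid R)$ makes explicit that shrinking the projector codebook (decreasing $L$) lowers the $R$-irrelevant portion of the projector entropy, which is what the bound rewards. The alternative route bypasses Theorem~\ref{theorem-1} entirely: the data processing inequality on $Y\rightarrow Z_1\rightarrow R$ gives $I(Y;Z_1) \geq I(R;Y)$ directly, and appending the nonpositive slack $I(Z_2;R) - H(Z_2) = -H(Z_2\mid R) \leq 0$ recovers the stated bound at once. The main obstacle is bookkeeping rather than depth: the only care required is aligning the directions of the two inequalities so they add without sign errors, and confirming that $Z_1\rightarrow Z_2\rightarrow R$ genuinely inherits the Markov property of Figure~\ref{fig:1} so that the data processing inequality applies. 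Both are routine once the Markov structure is taken as given, so I anticipate no substantive difficulty.
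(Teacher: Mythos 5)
Your main route has a genuine directional error: the data processing inequality on the chain $Z_1 \rightarrow Z_2 \rightarrow R$ yields $I(Z_1;R) \leq I(Z_2;R)$, \emph{not} $I(Z_1;R) \geq I(Z_2;R)$ as you claim --- since $R$ is generated from $Z_2$, the downstream variable $Z_2$ is at least as informative about $R$ as $Z_1$ is (this is precisely the direction recorded in Lemma~\ref{lem-2}). So piece (a) of your additive argument is unsupported as stated. The inequality you need is nonetheless true in this paper's setting, but for a different reason: because the projector is deterministic, $I(Z_2;R \mid Z_1)=0$ (the assumption invoked in the proof of Lemma~\ref{lem-2}), which gives $I(R;Z_2) \leq I(R;Z_1,Z_2) = I(R;Z_1)$ and hence, combined with the DPI, the equality $I(Z_1;R) = I(Z_2;R)$. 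With that repair, piece (a) as an equality plus your correct piece (b), $I(Z_1;Z_2) \leq H(Z_2)$, does yield the reduced inequality $I(Z_1;R)-I(Z_1;Z_2) \geq I(Z_2;R)-H(Z_2)$, and substituting into Theorem~\ref{theorem-1} finishes. The paper takes the same overall reduction through Theorem~\ref{theorem-1} but establishes the reduced inequality via an exact identity, $I(Z_1;R)-I(Z_1;Z_2) = -H(Z_2)+I(Z_2;R)+H(Z_2\mid R,Z_1)$, proved using $H(R\mid Z_2)=H(R\mid Z_2,Z_1)$, and then drops the nonnegative slack $H(Z_2\mid R,Z_1)$.

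Your ``fully self-contained alternative route,'' by contrast, is correct as written and genuinely different from the paper's proof: the DPI on $Y \rightarrow Z_1 \rightarrow R$ gives $I(Y;Z_1) \geq I(R;Y)$ (the direction is right this time), and $-H(Z_2)+I(Z_2;R) = -H(Z_2\mid R) \leq 0$ since conditional entropy is nonnegative in the paper's finite setting, so the stated bound follows in two lines without Theorem~\ref{theorem-1} or Lemma~\ref{lem-2}. This argument is more elementary, but it also exposes that Theorem~\ref{theorem-3} is weaker than the bare bound $I(Y;Z_1) \geq I(R;Y)$, with $-H(Z_2\mid R)$ serving only as interpretable slack; the paper's route keeps visible how the bound emerges from the $I(Z_1;R)$ versus $I(Z_1;Z_2)$ trade-off that motivates the discretized projector. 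If you present the alternative route as the proof, it stands; if you keep the main route, you must replace the misapplied DPI with the $I(Z_2;R\mid Z_1)=0$ argument above.
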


The above theorem gives a theoretical explanation for the discretized projector. Generally, the constrastive loss (\ie $I(Z_2;R)$) is fairly low and can be regarded as a constant. Meanwhile, $I(R;Y)$ is also a constant. Thus, in order to increase the lower bound, the projector feature needs to be more concentrated, so that $H(Z_2)$ becomes smaller. Intuitively, discretizing the projector feature to several points can achieve this and greatly benefit downstream performance. 
Meanwhile, the existence of sweet point is well explained by the theorem. On the one hand, if the point number is too few (\eg one single point), then the contrastive learning process will collapse (\ie $I(Z_2;R)$ becomes very small since $Z_2$ can not acquire adequate information from $R$). On the other hand, if we do not discretize the projector feature, then $Z_2$ will become uniform (\ie $H(Z_2)$ is a very large value), reducing the lower bound dramatically. The analysis above provides a theoretical perspective to understand why there is a peak when the point number grows from one to infinity.

\subsubsection{Sparse Projector}
Besides the discretization, we explore applying another powerful method for filtering our irrelevant information: the sparse autoencoder \citep{ng2011sparse}, which reconstructs the original features from a sparsely activated bottleneck layer. To be specific, we use the top-$k$ autoencoder \citep{gao2024scaling} as:
\begin{equation}
\begin{aligned}
h &={\rm Topk}(W_{\rm enc}(f(x)-b_{\rm pre})),\\
z_2&=W_{\rm dec}h+b_{\rm pre},
\end{aligned}
\label{sparse formular}
\end{equation}
where $W_{\rm enc}$, $b_{\rm pre}$, and $W_{\rm dec}$ are trainable parameters, ${\rm TopK}$ is an activation function that only keeps the $k$ largest hidden representations. Serving as an information bottleneck, this approach decreases the mutual information between encoder and projector features when $k$ becomes smaller. Meanwhile, we figure out that the hidden layer only preserves the information of the top-activated features, which helps to retain the crucial aspects of encoder features, i.e., the information related to the contrastive objectives. Based on the theoretical analysis in Section \ref{theoretical analysis}, we believe that the sparse autoencoder is an effective projector as we can filter out irrelevant semantics and only preserve the information related to the contrastive objectives.

\begin{table}[!t]\centering
    \caption{Linear evaluation accuracy (\%) of contrastive learning methods with the original and sparse projectors on CIFAR-10, CIFAR-100, and ImageNet-100.}
    \begin{tabular}{llccc}
        \toprule 
    Dataset & Framework & Baseline & Sparse Projector & Gains \\ \midrule
    \multirow{2}{*}{CIFAR-10} & SimCLR & 87.47 & 88.16 & +0.69 \\
     & Barlow Twins & 89.00 & 89.48 & +0.48 \\ \midrule
    \multirow{2}{*}{CIFAR-100} & SimCLR & 58.12 & 61.99 & +3.87\\
     & Barlow Twins & 64.16 & 68.15 & +3.99 \\ \midrule
    \multirow{2}{*}{ImageNet-100} & SimCLR & 67.36 & 68.48 & +1.12 \\
     & Barlow Twins & 69.96 & 71.86 & +1.90 \\ 
    \bottomrule
    \end{tabular}
    \label{tab:sparse auto encoder}
\end{table}

\begin{figure}[!t]
    \centering
    \begin{subfigure}{0.49\textwidth}
    \includegraphics[width=1\textwidth]{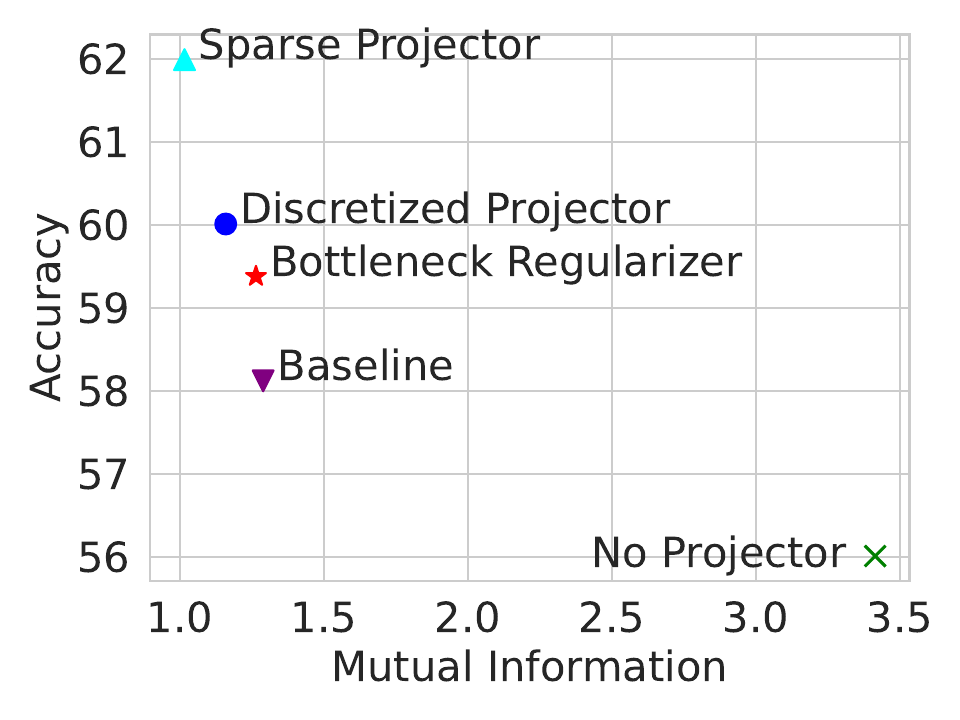}
        \caption{Mutual Information between Encoder and Projector Features}
        \label{fig:8}
    \end{subfigure}
        \begin{subfigure}{0.49\textwidth}
    \includegraphics[width=1\textwidth]{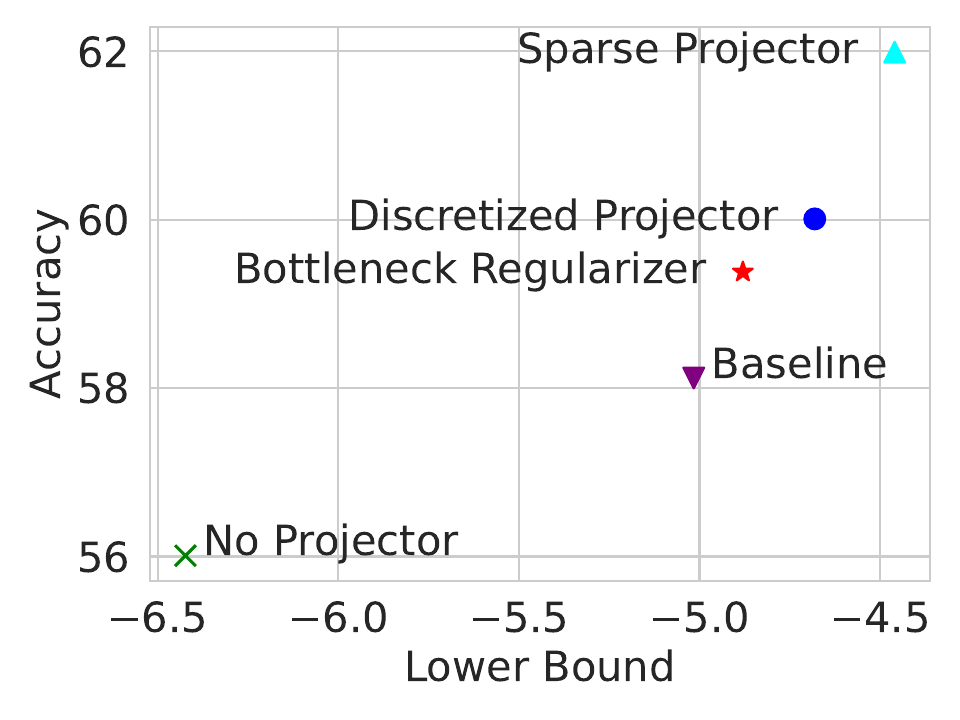}
        \caption{Estimated Lower Bounds of Downstream Performance}
        \label{fig:9}
    \end{subfigure}
    \begin{subfigure}{0.32\textwidth}
    \includegraphics[width=1\textwidth]{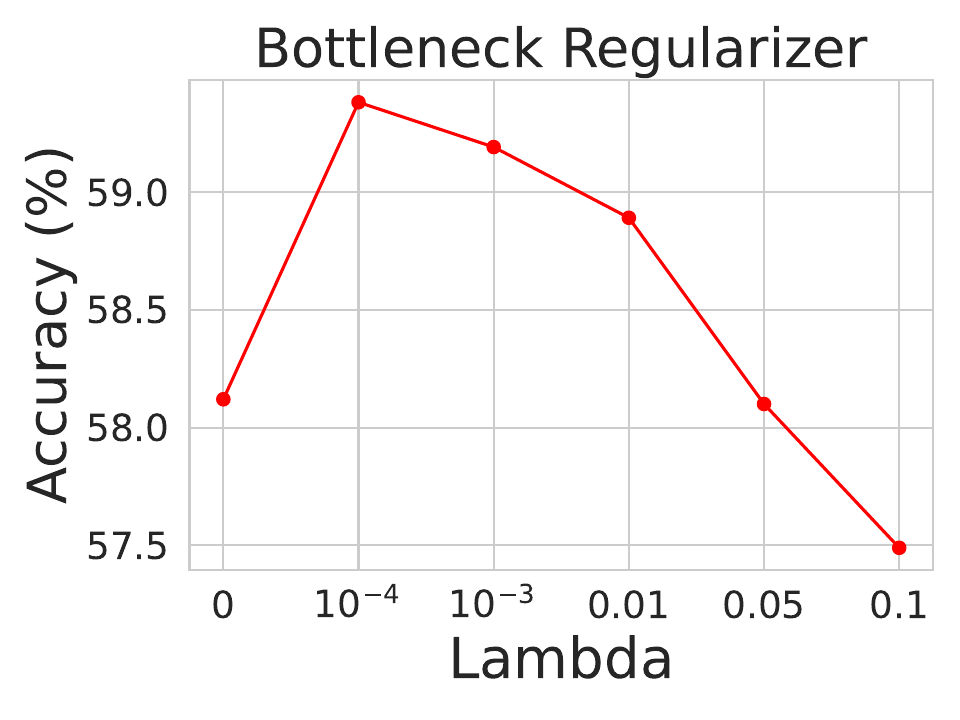}
        \caption{Scales of Training Regularizers}
        \label{fig:11}
    \end{subfigure}
    \hfill
    \begin{subfigure}{0.32\textwidth}
    \includegraphics[width=1\textwidth]{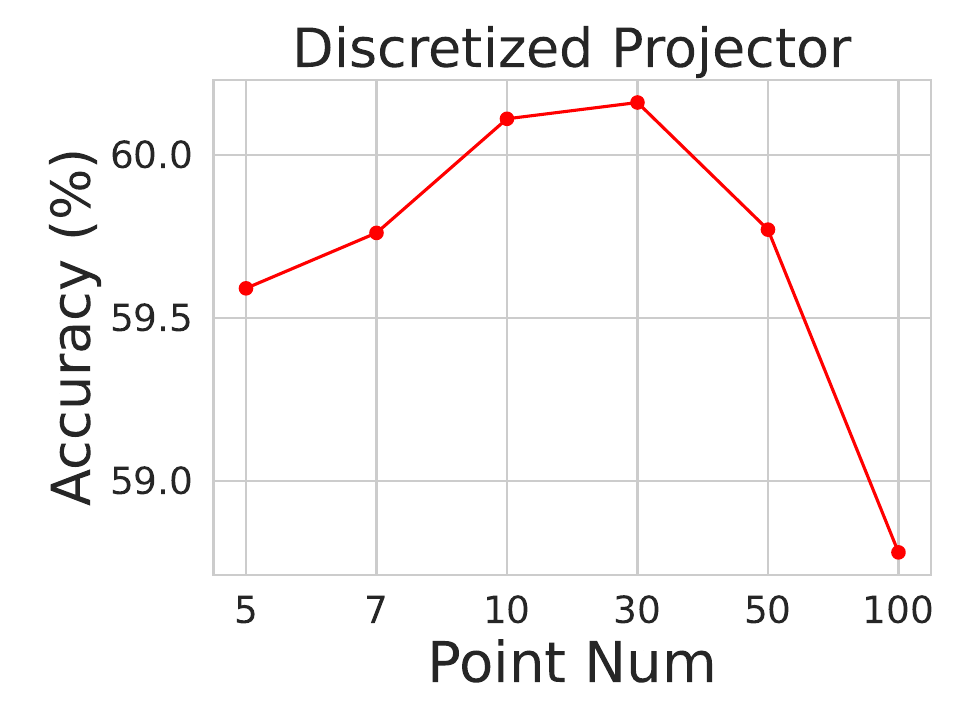}
        \caption{Numbers of Discrete Points}
        \label{fig:12}
    \end{subfigure}
    \hfill
    \begin{subfigure}{0.32\textwidth}
    \includegraphics[width=1\textwidth]{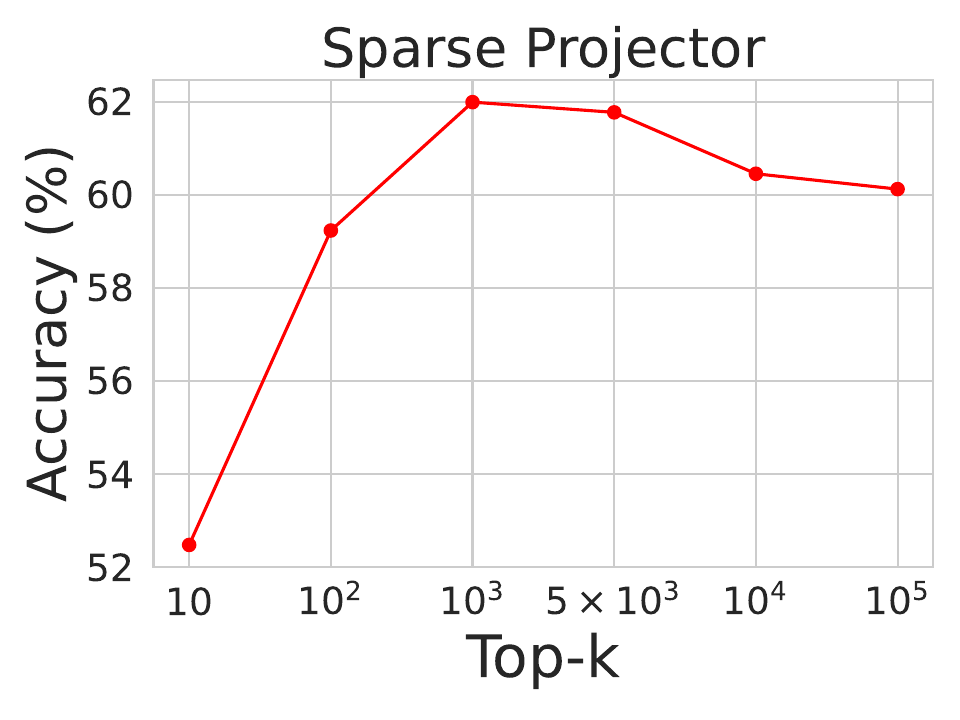}
        \caption{Numbers of Activated Features}
        \label{fig:13}
    \end{subfigure}
    \caption{Empirical understandings of proposed methods. (a), (b) show the correlation between estimated theoretical guarantees and downstream performance. The proposed methods improve the downstream performance by filtering out the irrelevant information, leading to a better downstream performance guarantee. (c), (d), (e) demonstrate the influence of different regularization parameters. With stronger regularizations, the downstream performance increases first and then decreases.The experiments are conducted on the models trained by SimCLR on CIFAR-100.}
    \label{fig:empirical understandings}
\end{figure}

\textbf{Setup.} The datasets and most parameters in this section are the same as those of the discretized projector. For the sparse projectors, we set aggressive ratios of unactivated neurons. Denote the dimension of the hidden layer $h$ in the sparse autoencoder as $d$, we set $k=0.001d$ for SimCLR on CIFAR-10, CIFAR-100, and ImageNet-100. For Barlow Twins, we set $k=0.001d$ on CIFAR10 and $k=0.2d$ on CIFAR-100 and ImageNet-100. More details can be found in Appendix \ref{sparse projector}.

\textbf{Empirical Results.} We compare the performance of the original projectors with our sparse ones across different benchmarks. As shown in Table \ref{tab:sparse auto encoder}, under the SimCLR framework, the sparse projector achieves an increase in classification accuracy by 0.69\% on CIFAR-10, 3.87\% on CIFAR-100, and 1.12\% on ImageNet-100. Under the Barlow Twins framework, it also manages to achieve an increase in classification accuracy by 0.48\% on CIFAR-10, 3.99\% on CIFAR-100, and 1.90\% on ImageNet-100. The outcomes consistently suggest that the sparse projector can successfully improve the downstream performance across different datasets and contrastive learning frameworks.

\subsection{Empirical Understandings of Proposed Methods}
\label{empirical understandings}
In previous sections, we modify the projectors by limiting the mutual information between encoder and projector features with training and structural regularizations. The empirical results show that our methods significantly improve the performance across different real-world datasets and contrastive frameworks.

In this section, we aim to establish a further understanding on the relationship between the empirical benefits and theoretical principles. The experiment details can be found in Appendix \ref{sec: empirical understandings}.

\textbf{Validation of Proposed Methods.}
We begin by analyzing the relationship between the impact of our proposed methods on theoretical guarantees and on practical performance. With the features trained by the SimCLR with no projector, the default projectors and our proposed projectors, we collect the downstream accuracy, the estimated lower bound and the estimated mutual information. In Figure \ref{fig:8} and \ref{fig:9}, we observe that our methods reduce the mutual information $I(Z_1;Z_2)$ and obtain a higher theoretical estimation of downstream performance. Furthermore, the degree of practical improvements strongly correlates with the improvements in theoretical guarantees. These results further validate the effectiveness of both our theoretical principles and proposed modifications.

\textbf{Ablation Study of Regularization Strengths.}
We then conduct ablation study on the effect of different parameters in our methods. To be specific, we respectively vary the coefficient of the bottleneck regularization term, the point number of discretization, and the activation number of the sparse projector for the three methods. The results are displayed in Figure \ref{fig:11}, \ref{fig:12}, and \ref{fig:13}. For all of three methods, with the increase of discarded information by the regularized projectors, the downstream performance increases first and then decreases. The empirical results indicate that the original projector preserves too much information irrelevant to the contrastive objective and we can find the sweet point by gradually enhance the strength of regularizations.

\section{Conclusion}
Among various special designs in contrastive learning, the projection head is a crucial component contributing to its impressive performance across various downstream tasks. However, the underlying mechanism behind the projection head is still under-explored.
In this paper, we present a new theoretical understanding of the projection head from an information-theoretic perspective. Particularly, by establishing the lower and upper bounds of the downstream performance, we reveal that an effective projector should act as an information bottleneck and filter out information irrelevant to contrastive objectives. Based on the theoretical analysis, we propose training and structural regularization methods to control the mutual information between encoder and projector features. Empirically, the proposed methods achieve consistent improvement on the downstream task across different datasets and contrastive frameworks. With the fruitful insights presented in this paper, we believe it has great potential and can inspire more principled designs in contrastive learning.

\section*{Reproducibility Statement}
To ensure the reproducibility of this paper, we detail the theoretical and empirical parts of our results in the main paper and the appendix. In Section \ref{theoretical analysis} of the main paper, we introduce the basic notaions and models used in the theoretical analysis. Furthermore, in Appendix \ref{omitted proofs}, we provide the detailed proofs of the theoretical guarantees. For empirical details, we briefly introduce the modified projectors in Section \ref{methods} of the main paper. In Appendix \ref{experiment details}, we elaborate the parameters of our modified projectors, including the bottleneck regularizer, the discrtized projector and the sparse projector, and the settings where we pretrain and evaluate the models. 

\section*{Acknowledgement}
Yisen Wang was supported by National Key R\&D Program of China (2022ZD0160300), National
Natural Science Foundation of China (92370129, 62376010), and Beijing Nova Program (20230484344,
20240484642). Yifei Wang was supported in part by the NSF AI Institute TILOS, and an Alexander von Humboldt Professorship.
\bibliography{iclr2025_conference}
\bibliographystyle{iclr2025_conference}

\newpage
\appendix
\section{Omitted Proofs}
\label{omitted proofs}
\subsection{Auxiliary Lemmas}
To prove the lower and upper bounds, we first introduce some basic lemmas.

\begin{lemma}[Information cutoff theory]
With information flow $Y\rightarrow{X}\rightarrow{Z_1}\rightarrow{Z_2}\rightarrow{R}$, we have $I(Y;R|Z_1)=0$ and $I(Y;Z_2|Z_1)=0$.
\label{lem-1}
\end{lemma}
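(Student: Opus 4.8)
The plan is to treat the information flow $Y \to X \to Z_1 \to Z_2 \to R$ literally as a Markov chain, so that the joint law factorizes as
\[
P(Y, X, Z_1, Z_2, R) = P(Y)\,P(X \mid Y)\,P(Z_1 \mid X)\,P(Z_2 \mid Z_1)\,P(R \mid Z_2).
\]
The entire lemma then reduces to the statement that conditioning on the interior variable $Z_1$ renders everything upstream (namely $Y$) independent of everything downstream (namely $Z_2$ and $R$). Recalling that $I(A;B\mid C)=0$ if and only if $A$ and $B$ are conditionally independent given $C$, it suffices to verify the two corresponding conditional independences. I would prove the $Z_2$ claim first, since the $R$ claim builds on it.

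For $I(Y;Z_2\mid Z_1)=0$, I would show $P(Z_2\mid Z_1,Y)=P(Z_2\mid Z_1)$. From the factorization, the conditional law of $Z_2$ given all upstream variables is $P(Z_2\mid Z_1,X,Y)=P(Z_2\mid Z_1)$, because $Z_2$ depends on the past only through $Z_1$; marginalizing out $X$ preserves this, giving $P(Z_2\mid Z_1,Y)=P(Z_2\mid Z_1)$. Hence $Z_2 \perp Y \mid Z_1$ and the conditional mutual information vanishes. For $I(Y;R\mid Z_1)=0$, I would similarly compute the conditional law of $R$ and show it is unaffected by $Y$ once $Z_1$ is fixed:
\[
P(R\mid Z_1,Y)=\sum_{Z_2}P(R\mid Z_2)\,P(Z_2\mid Z_1,Y)=\sum_{Z_2}P(R\mid Z_2)\,P(Z_2\mid Z_1)=P(R\mid Z_1),
\]
where the middle equality uses the identity $P(Z_2\mid Z_1,Y)=P(Z_2\mid Z_1)$ just established together with the fact that $R$ depends on the rest only through $Z_2$ (and the finiteness of the state space, assumed earlier, makes the sum well-defined). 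This yields $R \perp Y \mid Z_1$, hence $I(Y;R\mid Z_1)=0$.

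Since both steps are elementary consequences of the chain factorization, I do not expect a genuine obstacle; the only point requiring care is to justify the marginalizations honestly — that is, to treat ``$Z_2$ (resp.\ $R$) depends on the past only through $Z_1$ (resp.\ $Z_2$)'' as a derived consequence of the assumed arrow diagram rather than as an independent assumption, and to keep straight that it is conditioning on the \emph{interior} node $Z_1$ that severs the dependence between the upstream label $Y$ and the downstream targets. An equally valid but less self-contained alternative would be to invoke the global Markov property (equivalently, the data-processing structure) of the contracted chain $Y \to Z_1 \to (Z_2, R)$ directly; I prefer the explicit marginalization route as it keeps the argument elementary and transparent.
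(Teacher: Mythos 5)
Your proposal is correct and takes essentially the same approach as the paper: both arguments reduce the lemma to the conditional independences $P(R\mid Y,Z_1)=P(R\mid Z_1)$ and $P(Z_2\mid Y,Z_1)=P(Z_2\mid Z_1)$ implied by the Markov structure, and then conclude that the conditional mutual information vanishes. The only difference is one of explicitness --- you derive the contracted chain $Y\to Z_1\to (Z_2,R)$ by honest marginalization over $X$ and $Z_2$ from the full joint factorization, whereas the paper simply asserts the simplified chain $Y\to Z_1\to R$ ``from our model'' and writes out the defining sum for $I(Y;R\mid Z_1)$; your version fills in that asserted step but is not a genuinely different argument.
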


\begin{proof}
From our model, we can obtain a simplified information chain: $Y\rightarrow Z_1\rightarrow R$. Given $Z_1$, the information channel between $Y$ and $R$ is blocked out, i.e.,
\begin{center}
$P(R|Y,Z_1)=P(R|Z_1),$
\end{center}
where $P(\cdot)$ denotes the probabilities. Consequently, we have
\begin{center}
$P(R,Y|Z_1)=P(Y|Z_1)\cdot P(R|Y,Z_1)=P(Y|Z_1)\cdot P(R|Z_1),$
\end{center}
where the first equation is the Chain Rule of Conditional Probability. According to the definition of conditional mutual information, we have
\begin{center}
$I(Y;R|Z_1)=\sum\limits_{y \in \mathcal{Y}}\sum\limits_{r \in \mathcal{R}}\sum\limits_{z_1 \in \mathcal{Z}_1}p(y,r,z_1)log\frac{p(y,r|z_1)}{p(r|z_1)p(y|z_1)}=0.$

\end{center}
The second equation in this lemma can be proved by the same technique.
\end{proof}

\begin{lemma}[Information processing theory]
With information flow $Z_1\rightarrow Z_2\rightarrow R$ and the information processing inequality, we have
\[
I(Z_1;Z_2)\geq I(Z_1;R) \quad \text{and} \quad I(Z_2,R)\geq I(Z_1;R).
\]
\label{lem-2}
\end{lemma}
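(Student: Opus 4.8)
The plan is to recognize this statement as the classical data processing inequality and to prove it using the chain rule for mutual information together with the Markov structure of the information flow. First I would establish the conditional independence encoded by the chain $Z_1 \rightarrow Z_2 \rightarrow R$, namely $I(Z_1; R \mid Z_2) = 0$. This follows by exactly the same argument as in Lemma \ref{lem-1}: since $Z_2$ fully mediates the dependence of $R$ on $Z_1$, we have $P(R \mid Z_1, Z_2) = P(R \mid Z_2)$, which yields $P(Z_1, R \mid Z_2) = P(Z_1 \mid Z_2)\, P(R \mid Z_2)$, and hence the conditional mutual information vanishes. Rather than redo the summation, I would simply invoke the cutoff technique of Lemma \ref{lem-1} by analogy, applied to the sub-chain $Z_1 \rightarrow Z_2 \rightarrow R$.

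For the first inequality, I would expand the joint mutual information $I(Z_1; Z_2, R)$ in two ways using the chain rule,
\[
I(Z_1; Z_2, R) = I(Z_1; Z_2) + I(Z_1; R \mid Z_2) = I(Z_1; R) + I(Z_1; Z_2 \mid R).
\]
Substituting the Markov identity $I(Z_1; R \mid Z_2) = 0$ and using nonnegativity of the remaining conditional term, $I(Z_1; Z_2 \mid R) \geq 0$, immediately gives $I(Z_1; Z_2) \geq I(Z_1; R)$.

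For the second inequality I would apply the symmetric decomposition, expanding $I(R; Z_1, Z_2)$ along the other ordering,
\[
I(R; Z_1, Z_2) = I(R; Z_2) + I(R; Z_1 \mid Z_2) = I(R; Z_1) + I(R; Z_2 \mid Z_1).
\]
Again the Markov term $I(R; Z_1 \mid Z_2) = 0$ vanishes, and dropping the nonnegative term $I(R; Z_2 \mid Z_1) \geq 0$ yields $I(Z_2; R) \geq I(Z_1; R)$. The result is entirely standard, so I do not anticipate a genuine obstacle; the only point requiring care is justifying the Markov conditional independence $I(Z_1; R \mid Z_2) = 0$ cleanly, but this is exactly parallel to the cutoff argument already carried out in Lemma \ref{lem-1}, so it can be reused verbatim.
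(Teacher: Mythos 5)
Your proposal is correct and matches the paper's own argument: both establish the Markov conditional independence $I(Z_1;R\mid Z_2)=0$ via the cutoff technique of Lemma~\ref{lem-1} and then apply the chain rule to $I(Z_1;Z_2,R)$ and $I(R;Z_1,Z_2)$, dropping a nonnegative conditional mutual information term. The only cosmetic difference is that you write out both chain-rule expansions explicitly, whereas the paper compresses the final step into the monotonicity inequality $I(Z_1;R,Z_2)\geq I(Z_1;R)$ (and incidentally states an unused identity $I(Z_2;R\mid Z_1)=0$ that your cleaner write-up avoids).
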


\begin{proof}

With information flow $Z_1\rightarrow Z_2\rightarrow R$ and lemma \ref{lem-1}, we have $I(Z_1;R|Z_2)=0$ and $I(Z_2;R|Z_1)=0$. Consequently, the following two inequalities holds:
\[I(Z_1;Z_2)=I(Z_1;Z_2)+I(Z_1;R|Z_2)=I(Z_1;R,Z_2) \geq I(Z_1;R),\]
\[I(R;Z_2)=I(R;Z_2)+I(R;Z_1|Z_2)=I(R;Z_1,Z_2) \geq I(R;Z_1).\]

\end{proof}

\subsection{Proof of Theorem \ref{theorem-1}}
\begin{proof}

With lemma \ref{lem-1}, we know that $I(Y;R|Z_1)=0$, which yields
\[
I(Y;Z_1)=I(Y;Z_1)+I(Y;R|Z_1)=I(Y;Z_1,R).
\]
Additionally, lemma \ref{lem-2} claims that $I(Z_1;R) \leq I(Z_1;Z_2)$, which further gives
\begin{equation}
\begin{aligned}
H(Z_1,R)&=H(Z_1)+H(R)-I(Z_1;R) \\
&\geq H(Z_1)+H(R)-I(Z_1;Z_2) \\
&= H(Z_1;Z_2)-H(Z_2)+H(R).
\end{aligned}
\end{equation}
Thus, we can deduce the lower bound as follows:
\[
\begin{aligned}
I(Y;Z_1)&=I(Y;Z_1,R) \\
&=H(Z_1,R)-H(Z_1,R|Y) \\
&\geq H(Z_1;Z_2)-H(Z_2)+H(R)-H(Z_1,R|Y) \\
&= H(Z_1;Z_2)-H(Z_2)+H(R)-H(R|Y)-H(Z_1|R,Y) \\
&\geq H(Z_1|Z_2)-H(Z_1|R)+I(R;Y) \\
&= I(Z_1;R)-I(Z_1;Z_2)+I(R;Y).
\end{aligned}
\]
\end{proof}

\subsection{Proof of Theorem \ref{theorem-2}}

\begin{proof}
With lemma \ref{lem-1}, we know that $I(Y;Z_2|Z_1)=0$, which yields
\begin{center}
$I(Y;Z_1)=I(Y;Z_1)+I(Y;Z_2|Z_1)=I(Y;Z_1,Z_2)$
\end{center}
Then we can deduce the upper bound as follows:
\begin{equation}
\begin{aligned}
I(Y;Z_1)
&=I(Y;Z_1,Z_2) \\
&=I(Y;Z_1|Z_2)+I(Y;Z_2) \\
&\leq H(Z_1|Z_2)+I(Y;Z_2) \\
&=I(Y;Z_2)+H(Z_1)-I(Z_1;Z_2).
\end{aligned}
\end{equation}
\end{proof}

\section{Experiment Details}
\label{experiment details}
In this section, we detail the setting of each individual experiment in this work. All experiments are conducted with at most two NVIDIA RTX 3090 GPUs.
\subsection{Experiment Details of Empirical Verification on Theoretical Guarantees}
\label{empirical verification}

In this part, we justify our theoretical guarantees by two kinds of validation experiments. On the one hand, we manage to verify that the tendency of bounds fits quite well with that of validation accuracy. On the other hand, we demonstrate that both bounds are highly correlated with downstream task performance.

\textbf{Tendency during training.} 
In this experiment, we aim to show the tendency of the lower and upper bounds during the pretraining process and then compare them with the trend of validation accuracy. We conduct the experiment under the SimCLR framework on CIFAR-100 with a total run of 200 epochs, using ResNet-18 as the backbone and an MLP as our projector. 
As for the hyper-parameters in detail, we use learning rate 0.4, weight decay $10^{-4}$, InfoNCE temperature 0.2.
Then we record both bounds and validation accuracy every 10 epochs and plot the first 100 epochs. 

\add{As a supplemental aspect for the experiments of tendency during training, we plot the tendency of baselines from SimCLR and Barlow Twins on CIFAR-10, CIFAR-100, and ImageNet-100. To be more specific, 
 Figure \ref{fig:2}, \ref{fig:tendency simclr cf10 img} and \ref{fig:tendency barlow} respectively represent the baseline of SimCLR on CIFAR-100, the baselines of SimCLR on CIFAR-10 and ImageNet-100, and the baselines of Barlow Twins on CIFAR-100, CIFAR-10, ImageNet-100.}

\begin{figure}
    \centering
    \begin{subfigure}{0.24\textwidth}
    \includegraphics[width=\textwidth, height=3cm]{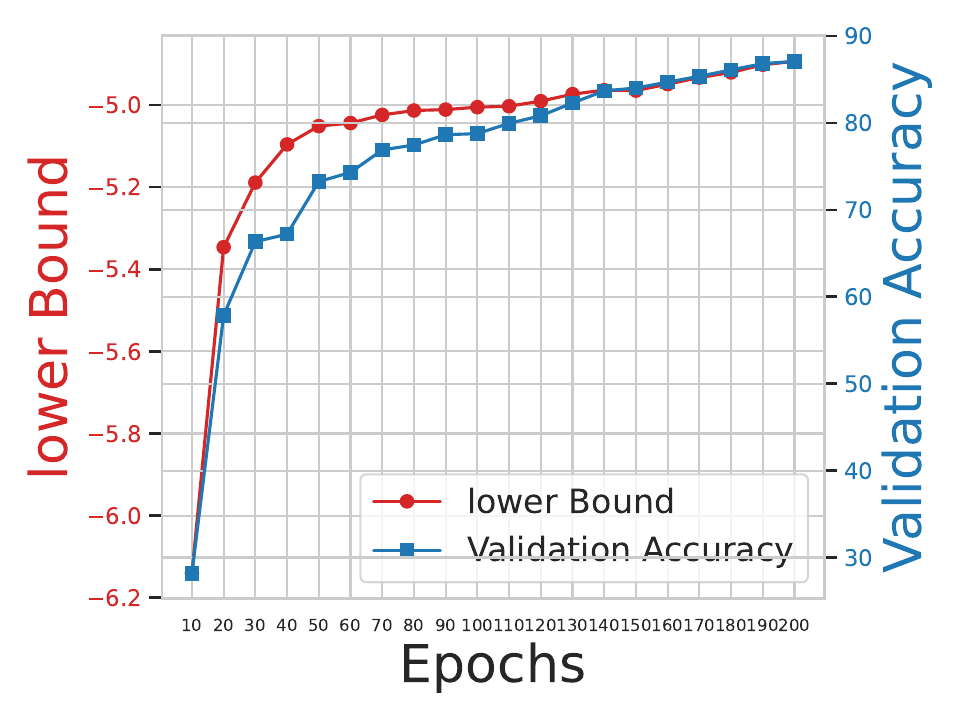}
        \caption{Lower Bound Tendency During Training on CIFAR-10}
        \label{fig:1 in tendency}
    \end{subfigure}
    \hfill
    \begin{subfigure}{0.24\textwidth}
    \includegraphics[width=\textwidth, height=3cm]{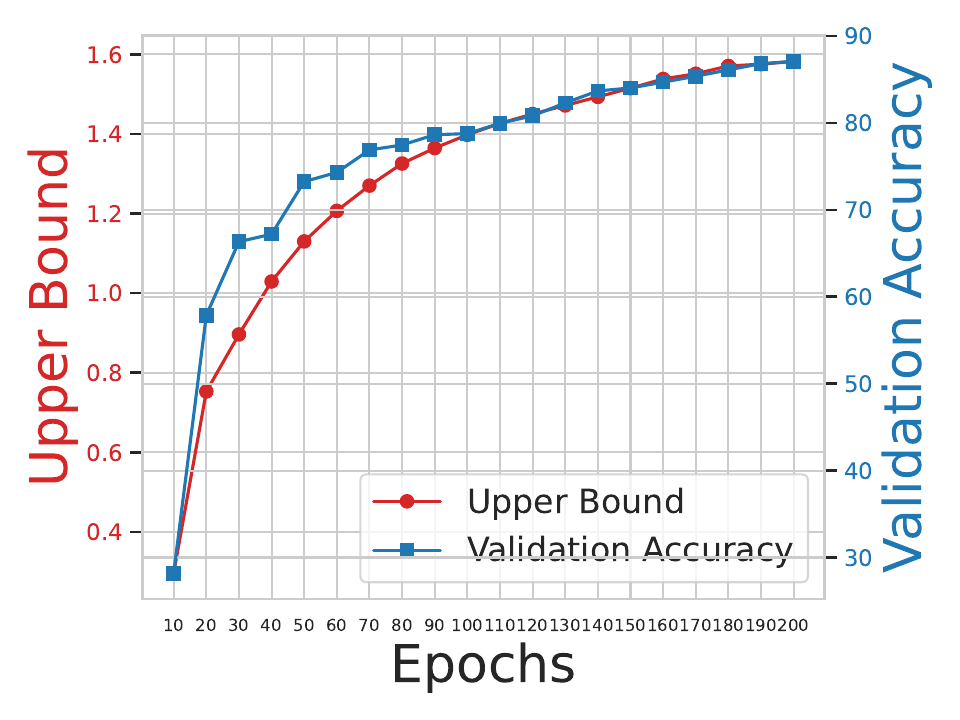}
        \caption{Upper Bound Tendency During Training on CIFAR-10}
        \label{fig:2 in tencency}
    \end{subfigure}
    \begin{subfigure}{0.24\textwidth}
    \includegraphics[width=\textwidth, height=3cm]{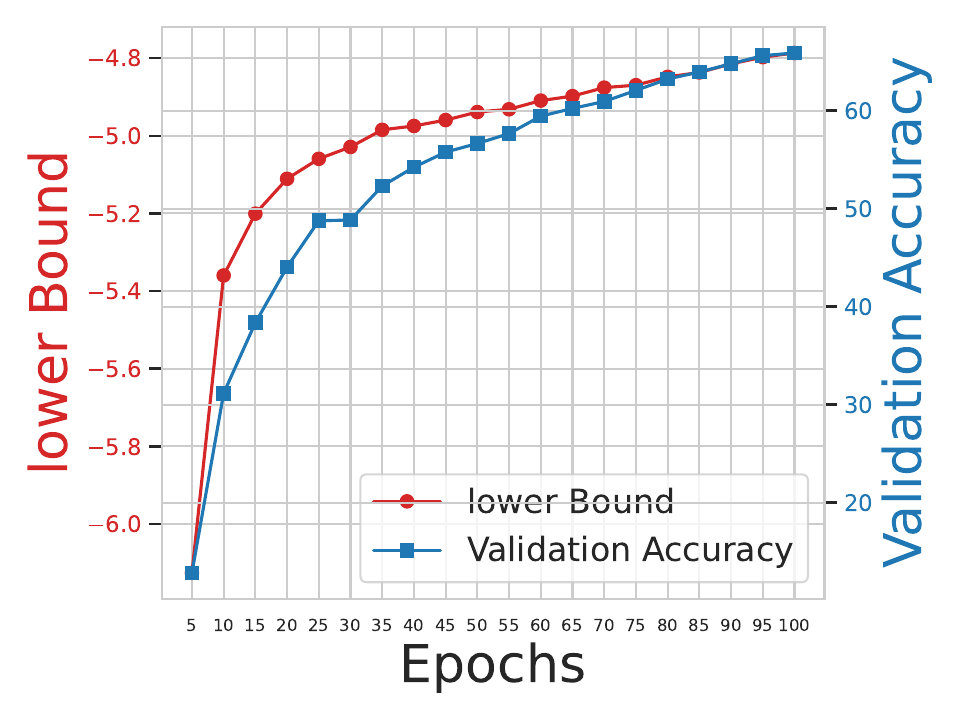}
        \caption{Lower Bound Tendency During Training on ImageNet-100}
        \label{fig:3 in tendency}
    \end{subfigure}
    \hfill
    \begin{subfigure}{0.24\textwidth}
    \includegraphics[width=\textwidth, height=3cm]{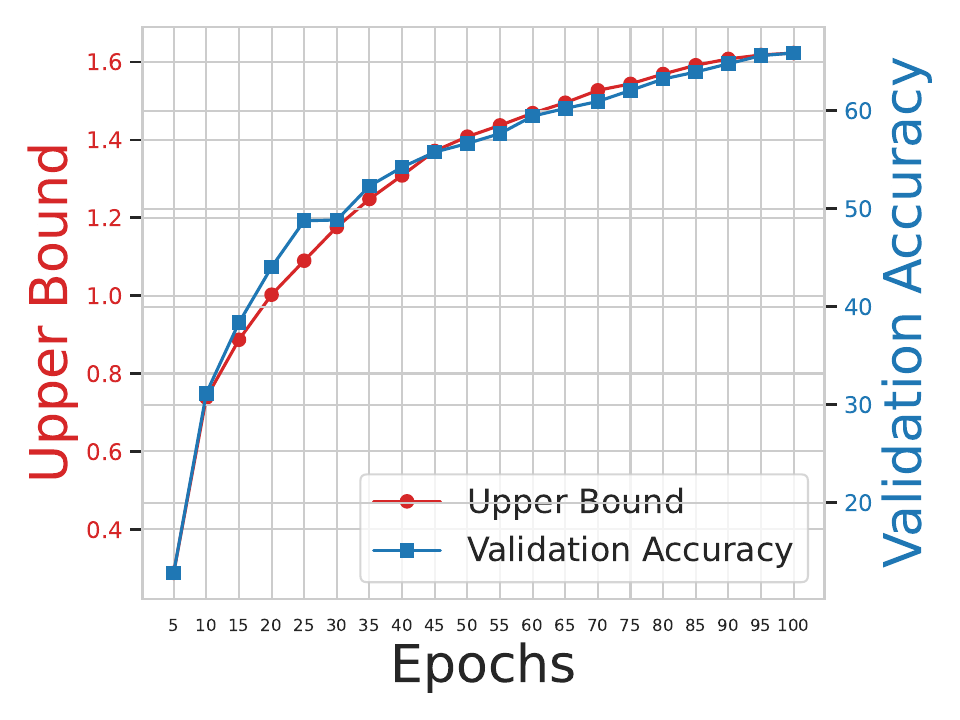}
        \caption{Upper Bound Tendency During Training on ImageNet-100}
        \label{fig:4 in tendency}
    \end{subfigure}
    \caption{\add{Tendency during training of SimCLR on CIFAR-10 and ImageNet-100.}}
    \label{fig:tendency simclr cf10 img}
\end{figure}

\begin{figure}
    \centering
    \begin{subfigure}{0.32\textwidth}
    \includegraphics[width=\textwidth, height=3.5cm]{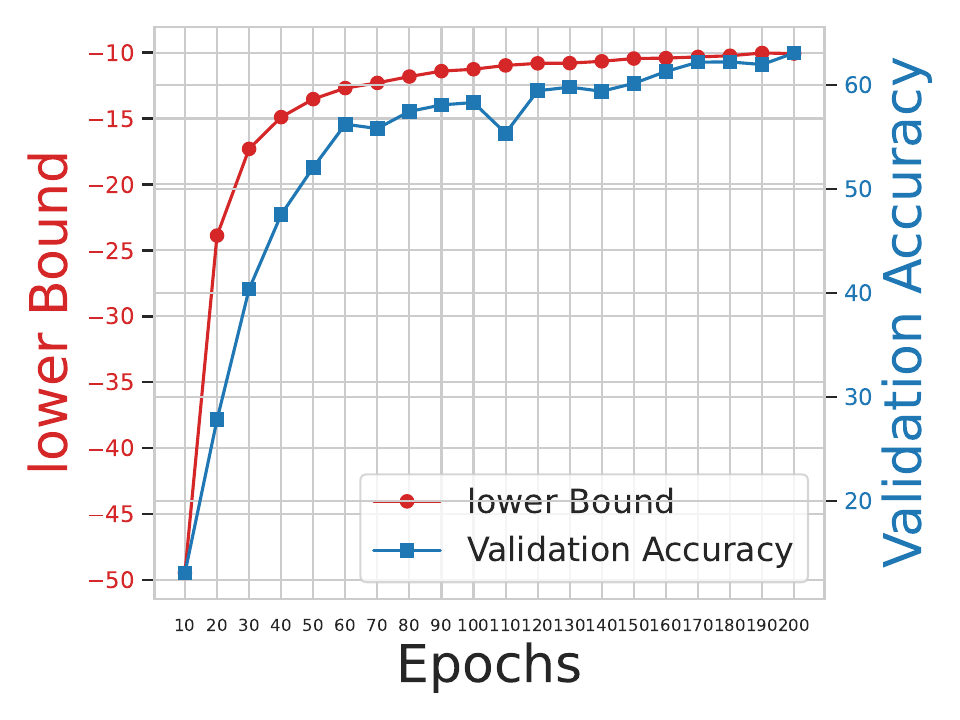}
        \caption{Lower Bound Tendency During Training on CIFAR-100}
        \label{fig:1 in bar tendency}
    \end{subfigure}
    \hfill
    \begin{subfigure}{0.32\textwidth}
    \includegraphics[width=\textwidth, height=3.5cm]{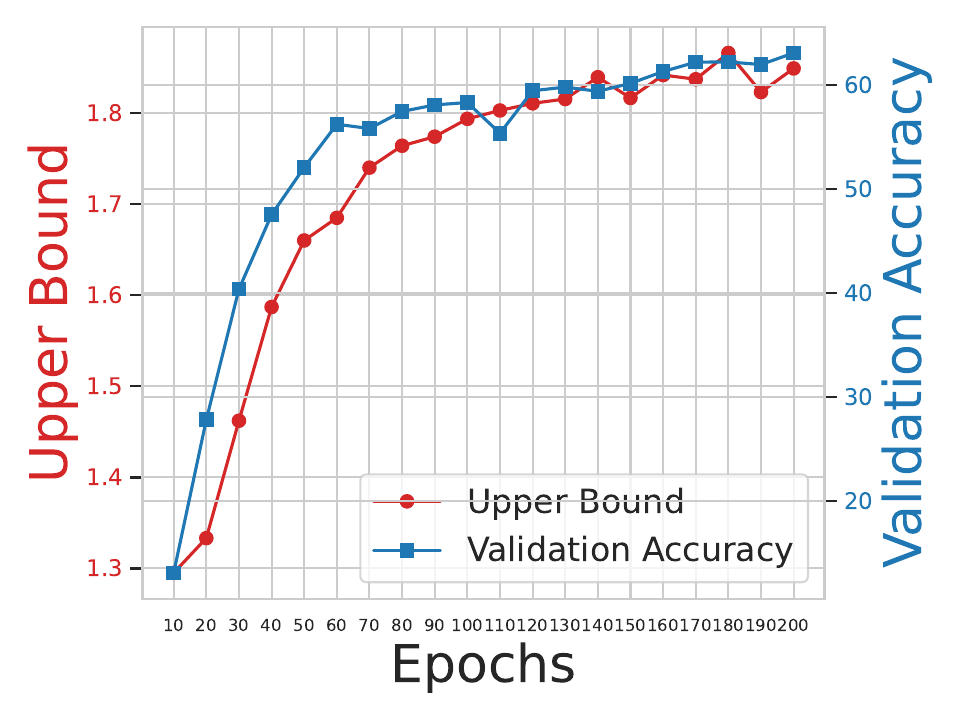}
        \caption{Upper Bound Tendency During Training on CIFAR-100}
        \label{fig:2 in bar tencency}
    \end{subfigure}
    \begin{subfigure}{0.32\textwidth}
    \includegraphics[width=\textwidth, height=3.5cm]{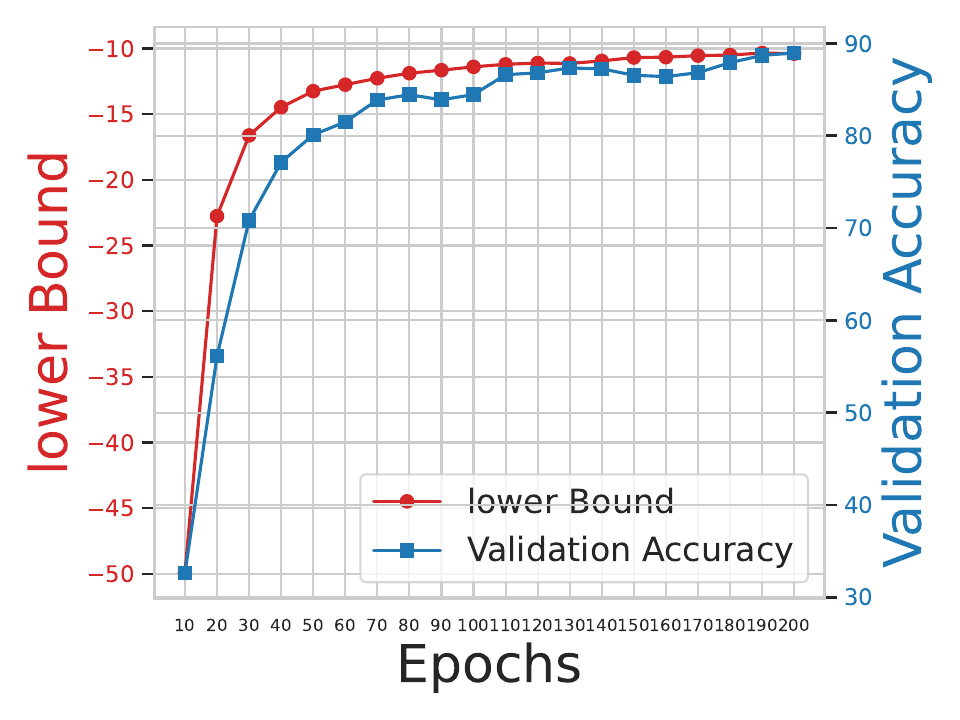}
        \caption{Lower Bound Tendency During Training on CIFAR-10}
        \label{fig:3 in bar tendency}
    \end{subfigure}

    \begin{subfigure}{0.32\textwidth}
    \includegraphics[width=\textwidth, height=3.5cm]{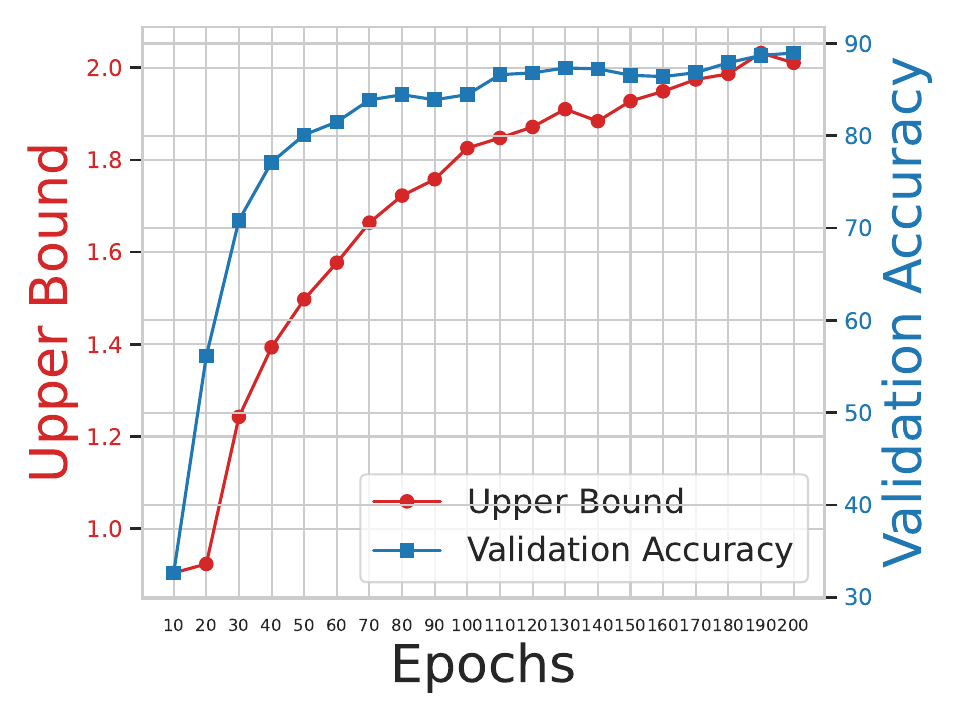}
        \caption{Upper Bound Tendency During Training on CIFAR-10}
        \label{fig:4 in bar tendency}
    \end{subfigure}
    \hfill
    \begin{subfigure}{0.32\textwidth}
    \includegraphics[width=\textwidth, height=3.5cm]{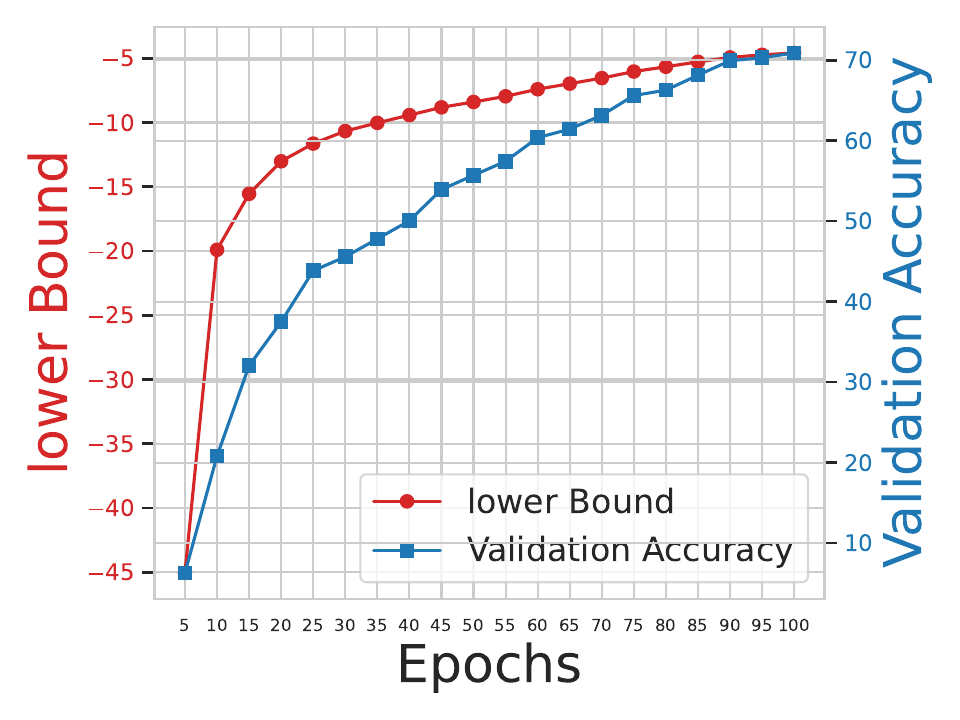}
        \caption{Lower Bound Tendency During Training on ImageNet-100}
        \label{fig:5 in bar tencency}
    \end{subfigure}
    \begin{subfigure}{0.32\textwidth}
    \includegraphics[width=\textwidth, height=3.5cm]{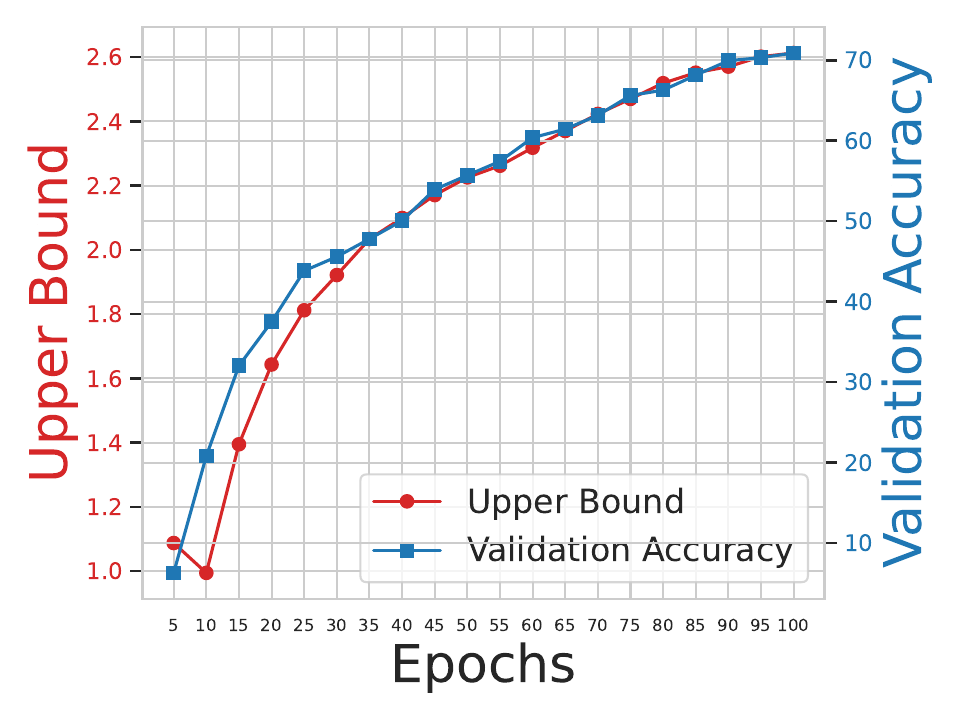}
        \caption{Upper Bound Tendency During Training on ImageNet-100}
        \label{fig:6 in bar tendency}
    \end{subfigure}
    \caption{\add{Tendency during training of Barlow Twins on CIFAR-10, CIFAR-100 and ImageNet-100.}}
    \label{fig:tendency barlow}
\end{figure}

\textbf{Correlation between bounds and downstream accuracy.} We investigate the correlation between the upper and lower bounds and downstream task accuracy with different projectors. In these experiments, we utilize the SimCLR framework on both CIFAR-10 and CIFAR-100 with ResNet-18 as the backbone and train for 200 epochs using batch size 256. As for the hyper-parameters, we use learning rate 0.4, weight decay $10^{-4}$, InfoNCE temperature 0.2, augmentation cropsize 32.

To be more specific, we adopt linear projectors with different output dimensions (128, 256, 512, 1024), MLP projectors (i.e., Linear-ReLU-Linear) with different output dimensions (128, 256, 512, 1024), no projector (i.e., use the encoder feature to calculate the contrastive loss), a deepened-MLP projector (Linear-ReLU-Linear-ReLU-Linear) and two improved projectors (identity-mapping and DirectCLR) on both CIFAR-10 and CIFAR-100. \add{As replenishing experiments, we change learning rate to 0.2, weight decay to $10^{-3}$, InfoNCE temperature to 0.1, augmentation crop size to 64 respectively on both CIFAR-10 and CIFAR-100.}
In total, these choices of parameters correspond to \add{16} experiments and \add{16} dots in Figure \ref{fig:correlation}. We calculate the covariance between the lower and upper bounds and downstream task accuracy, using it as a criterion to validate the effectiveness of our theoretical estimation on the downstream performance of encoder features.

\add{As a supplemental aspect for the experiments of correlation, we also investigate the bounds and downstream performance of different projectors of SimCLR on ImageNet-100 and Barlow Twins on CIFAR-100. We plot the results in Figure \ref{fig:ImageNet100 and Barlow}, in which different points represent the experiments conducted with different projectors. To be more specific, for SimCLR on ImageNet-100, we adopt linear projectors with different output dimensions (128, 256, 512, 1024), MLP projectors (i.e., Linear-ReLU-Linear) with different output dimensions (128, 256, 512, 1024), no projector (i.e., use the encoder feature to calculate the contrastive loss). For Barlow Twins on CIFAR-100, we adopt full Barlow Twin's projectors with different hidden dimensions (1024, 4096) and different ouput dimensions (1024, 4096), half of Barlow's projectors (i.e., Linear-BN-ReLU-Linear) with different hidden dimensions (1024, 4096), output dimensions (1024, 4096), baseline (with hidden dimension 2048 and output dimension 2048) and no projector.} 

\begin{figure}
    \centering
    \begin{subfigure}{0.24\textwidth}
    \includegraphics[width=\textwidth, height=3cm]{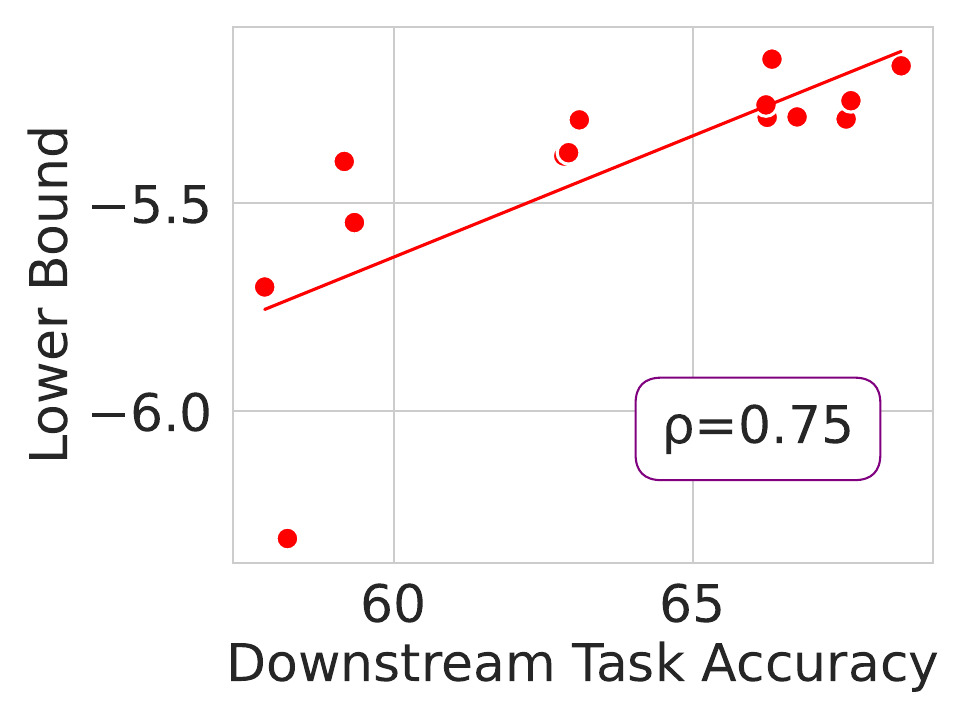}
        \caption{Lower Bound of SimCLR on ImageNet-100}
        \label{fig:2 in100}
    \end{subfigure}
    \hfill
    \begin{subfigure}{0.24\textwidth}
    \includegraphics[width=\textwidth, height=3cm]{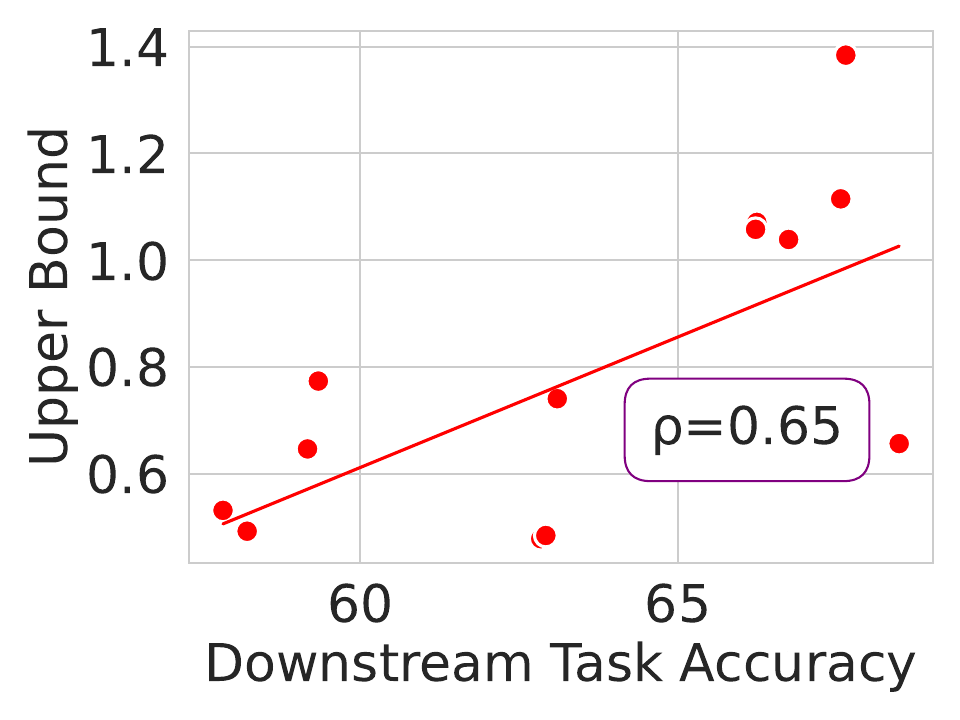}
        \caption{Upper bound of SimCLR on ImageNet-100}
        \label{fig:3 in100}
    \end{subfigure}
    \begin{subfigure}{0.24\textwidth}
    \includegraphics[width=\textwidth, height=3cm]{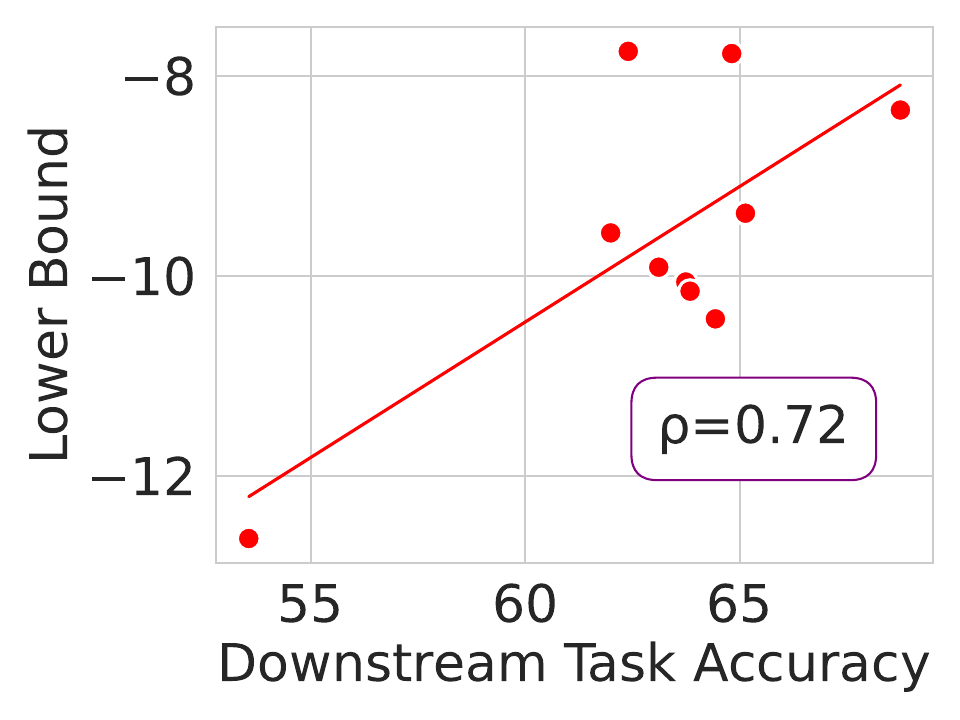}
        \caption{Lower Bound of Barlow Twins on CIFAR-100}
        \label{fig:4 in100}
    \end{subfigure}
    \hfill
    \begin{subfigure}{0.24\textwidth}
    \includegraphics[width=\textwidth, height=3cm]{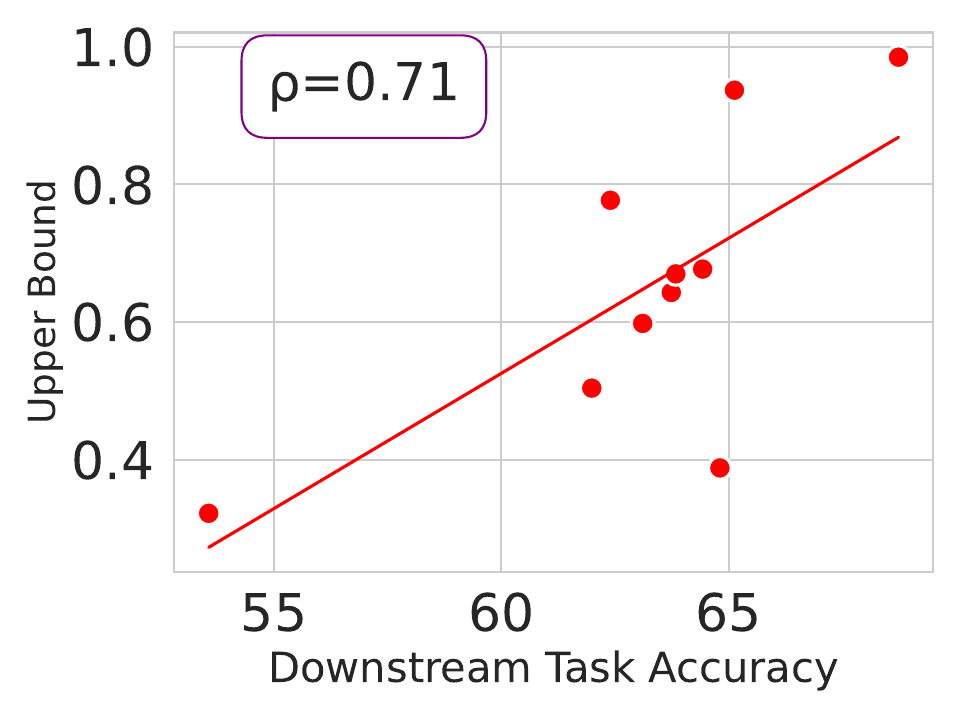}
        \caption{Upper Bound of Barlow Twins on CIFAR-100}
        \label{fig:5 in100}
    \end{subfigure}
    \caption{\add{Correlation between downstream task accuracy and the estimated theoretical bounds of SimCLR on ImageNet-100 and Barlow-Twins on CIFAR-100.}}
    \label{fig:ImageNet100 and Barlow}
\end{figure}

\subsection{Experiment Details of Training Regularization}
\label{training regularization}
In this experiment, we add a regularization term to the loss function to optimize the training of the projector. We conduct this experiment on CIFAR-10, CIFAR-100 and ImageNet-100, using SimCLR and Barlow Twins as our frameworks. For all experiments, we adopt ResNet-18 as the backbone and use an MLP projector whose structure is dependent on the framework and will be introduced in the following. During the pretraining process, we simultaneously train a classifier, which is a single-layer linear head and is trained without influencing on the rest of the network. 

Recall Definition \ref{matrix mutual information}, and that the regularization term is $\mathcal{L}_{reg}=I_\alpha(\hat{Z}_1 \hat{Z}_1^\top;\hat{Z}_2 \hat{Z}_2^\top)$, where we take $\alpha=2$ as the default value. As for the original loss functions of SimCLR and Barlow Twins, we have
\[
\mathcal{L}_{simclr}=-\mathbb{E}_{x,x^+,\{x^-\}_{i=1}^n} \log\frac{\exp (f(x)^\top f(x^+))}{\exp(f(x)^\top f(x^+))+\sum_{i=1}^n \exp(f(x)^\top f(x^-))},
\]
and
\[
\mathcal{L}_{BT}=\sum_{i} (1-\mathcal{C}_{ii})^2 + \gamma \sum_{i} \sum_{j \neq i} \mathcal{C}^2_{ij}
\]
where $\mathcal{C}_{ij}=\frac{\sum_b z^A_{b,i}z^B_{b,j}}{\sqrt{\sum_b (z^A_{b,i})^2}\sqrt{\sum_b (z^B_{b,j})^2}}$, $z^A$ and $z^B$ denote the projector features of two views, $b$ is the batch size and $\gamma$ is a scaling factor. We then produce our loss functions by adding the regularization term, i.e.,
\[
\mathcal{L}_{simclr\_total} = \mathcal{L}_{simclr}+\lambda \mathcal{L}_{reg} \quad \text{and} \quad \mathcal{L}_{BT\_total} = \mathcal{L}_{BT}+\lambda \mathcal{L}_{reg}.
\]

For experiments conducted on CIFAR-10 and CIFAR-100, we use batch size 256 and train for 200 epochs, while for those conducted on ImageNet-100, we use batch size 128 and train for 100 epochs.

\textbf{SimCLR.}
We detail other hyper-parameters in the SimCLR framework. On CIFAR-10 and CIFAR-100, we use learning rate 0.4, weight decay $10^{-4}$, InfoNCE temperature 0.2, and set $\lambda$ to $10^{-4}$. Our projector adopts a Linear-ReLU-Linear structure, where we use 2048 as the hidden dimension and 256 as the output dimension. On ImageNet-100, we use learning rate 0.3, weight decay $10^{-4}$, InfoNCE temperature 0.2 , and set $\lambda$ to 0.01. We use the same projector structure but change the hidden dimension to 4096 and the output dimension to 512.

\textbf{Barlow Twins.}
We detail other hyper-parameters in the Barlow Twins framework. On CIFAR-10 and CIFAR-100, we use learning rate 0.3, weight decay $10^{-4}$, scaling factor $5 \times 10^{-3}$, and set $\lambda$ to $10^{-3}$. Our projector consists of three linear layers separated by two pairs of batch normalization and ReLU layers, where we use 2048 as the hidden dimension and the output dimension. On ImageNet-100, we use learning rate 0.3, weight decay $10^{-4}$, scaling factor $5 \times 10^{-3}$, and set $\lambda$ to 0.01. We use the same projector structure as the one on CIFAR datasets.

\subsection{Experiment Details of Discretized Projector}
\label{discretized projector}
In this experiment, we discretize each dimension of the projector to a certain number of points, which we denote as $L$. We conduct this experiment on CIFAR-10, CIFAR-100 and ImageNet-100, using SimCLR and Barlow Twins as our frameworks. All the network settings are exactly the same as those in Appendix \ref{training regularization}. For experiments conducted on CIFAR-10 and CIFAR-100, we use batch size 256 and train for 200 epochs, while for those conducted on ImageNet-100, we use batch size 128 and train for 100 epochs. We respectively take $L=30$ under the SimCLR framework and $L=3$ under the Barlow Twins framework.

\subsection{Experiment Details of Sparse Projector}
\label{sparse projector}

In this experiment, we employ the sparse autoencoder in our projection head, adding sparsity to the projector feature so as to reduce the similarity between encoder and projector features (i.e., decrease $I(Z_1;Z_2)$). To be specific, let us recall the formular of top-k autoencoder,
\begin{center}
$h ={\rm Topk}(W_{\rm enc}(f(x)-b_{\rm pre})),$\\
\end{center}
in which TopK is an activation function that only maintains the $k$ largest hidden representations and keeps the rest inactivated (i.e., set them to 0). We conduct this experiment on CIFAR-10, CIFAR-100 and ImagNet-100, utilizing SimCLR and Barlow Twins as our frameworks. For all experiments, we adopt ResNet-18 as the backbone, while the structure of the projector is dependent on the framework we choose and will be introduced in the following in detail. For experiments conducted on CIFAR-10 and CIFAR-100, we train for 200 epochs using batch size 256, while for those carried out on ImageNet-100, we train for 100 epochs with batch size 128. 

\textbf{SimCLR.}
Under the SimCLR framework, we directly employ the sparse autoencoder as our projector, totally deleting the original projector of SimCLR. Our projector adopts a structure similar to SimCLR (i.e., Linear-ReLU-Linear), but we activate only a tiny number of hidden representations. On CIFAR-10 and CIFAR-100, we use learning rate 0.4, weight decay $10^{-4}$, and InfoNCE temperature 0.2. To be specific, on CIFAR-10, we use hidden dimension $10^4$ and set $k=100$ after trying a variety of different $k$. On CIFAR-100, we set the hidden dimension to $10^5$ and $k=10^2$ after testing different $k$ from 10 to $10^5$. On ImageNet-100, we use learning rate 0.3, weight decay $10^{-4}$, InfoNCE temperature 0.2, fix the hidden dimension to $10^5$, and then set $k=10^2$.

\textbf{Barlow Twins.}
Under the Barlow Twins framework, we preserve half of its original projector and employ the sparse autoencoder following it, i.e., a Linear-BN-ReLU structure. On CIFAR-10 and CIFAR-100, we use learning rate 0.3, weight decay $10^{-4}$ and scaling factor $10^{-5}$. Similar to the SimCLR section, we use hidden dimension $10^5$ on CIFAR-10 and $10^6$ on CIFAR-100, and then set $k=2\times 10^3$ on CIFAR-10 and $k=10^3$ on CIFAR-100. On ImageNet-100, we use learning rate 0.3, weight decay $10^{-4}$, scaling factor $10^{-5}$, hidden dimension $10^6$ and set $k=2\times 10^3$.

\subsection{Experiment Details of Empirical Understandings of Proposed Methods}
\label{sec: empirical understandings}
In this section, we use ResNet-18 as the backbone and SimCLR as our framework. During the pretraining process, we simultaneously train a classifier, which is a single-layer linear head and is trained without influencing on the rest of the network. \add{As default parameters, we take learning rate 0.4, weight decay $10^{-4}$, and InfoNCE temperature 0.2, batch size 256, training epoch 200 on CIFAR-10 and CIFAR-100, and learning rate 0.3, weight decay $10^{-4}$, InfoNCE temperature 0.2, batch size 128, training epoch 100 on ImageNet-100.} The projectors of our proposed methods follow the corresponding designs in previous sections.

\begin{figure}
    \centering
    \begin{subfigure}{0.2\textwidth}
    \includegraphics[width=\textwidth]{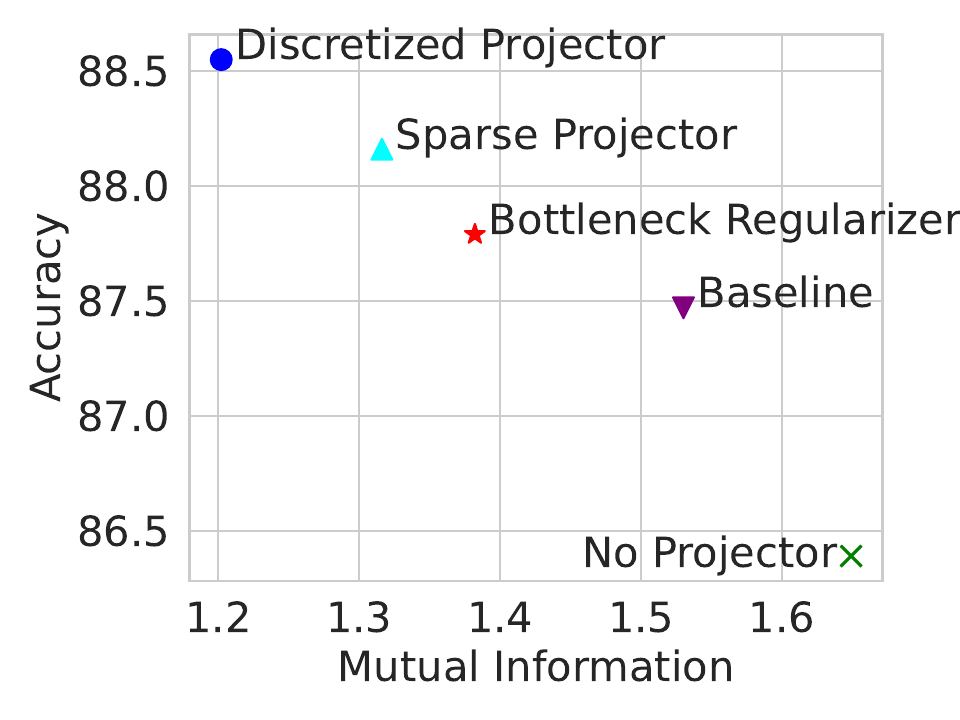}
        \caption{Mutual Information between Encoder and Projector Features of SimCLR on CIFAR-10}
        \label{fig:1 in dots graph}
    \end{subfigure}
    \hfill
    \begin{subfigure}{0.2\textwidth}
    \includegraphics[width=\textwidth, ]{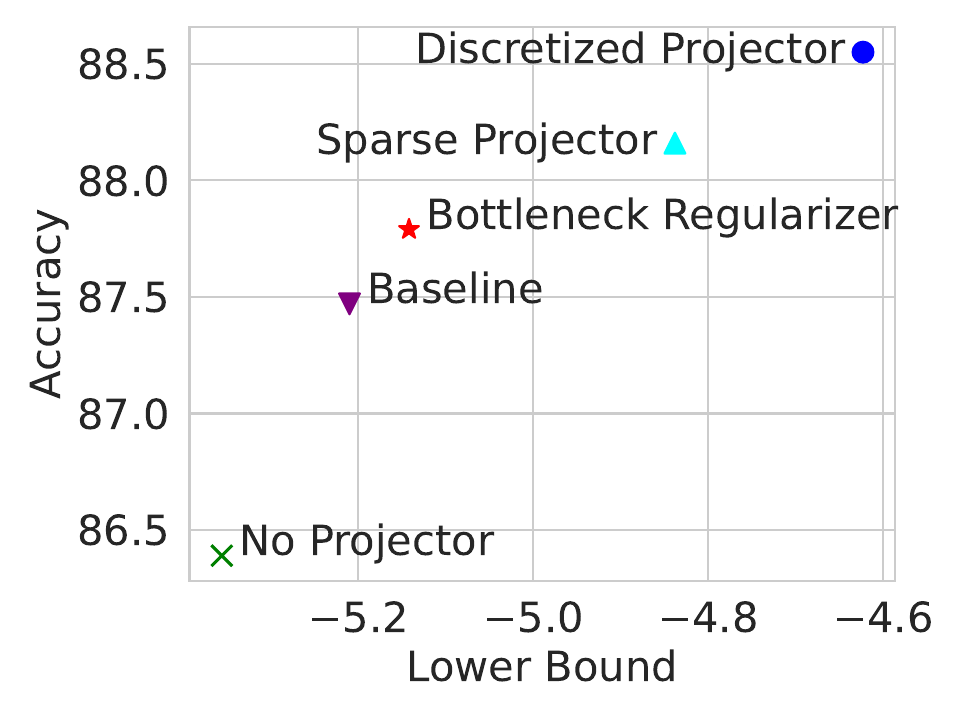}
        \caption{Estimated Lower Bounds of Downstream Performance of SimCLR on CIFAR-10}
        \label{fig:2 in dots graph}
    \end{subfigure}
    \hfill
    \begin{subfigure}{0.2\textwidth}
    \includegraphics[width=\textwidth,]{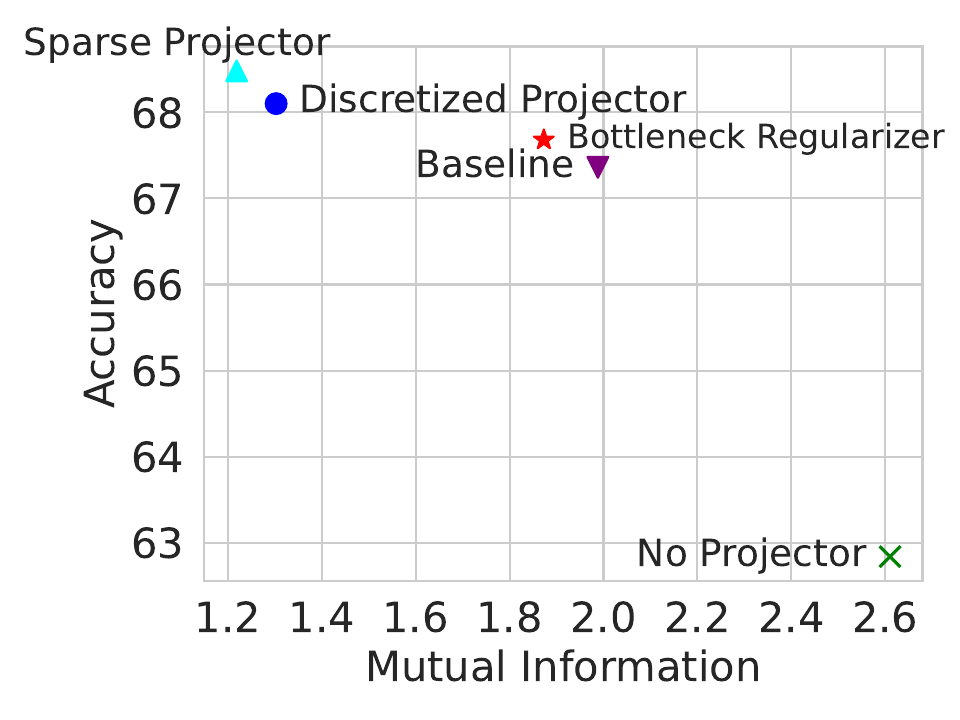}
        \caption{Mutual Information between Encoder and Projector Features of SimCLR on ImageNet-100}
        \label{fig:3 in dots graph}
    \end{subfigure}
    \hfill
    \begin{subfigure}{0.2\textwidth}
    \includegraphics[width=\textwidth,]{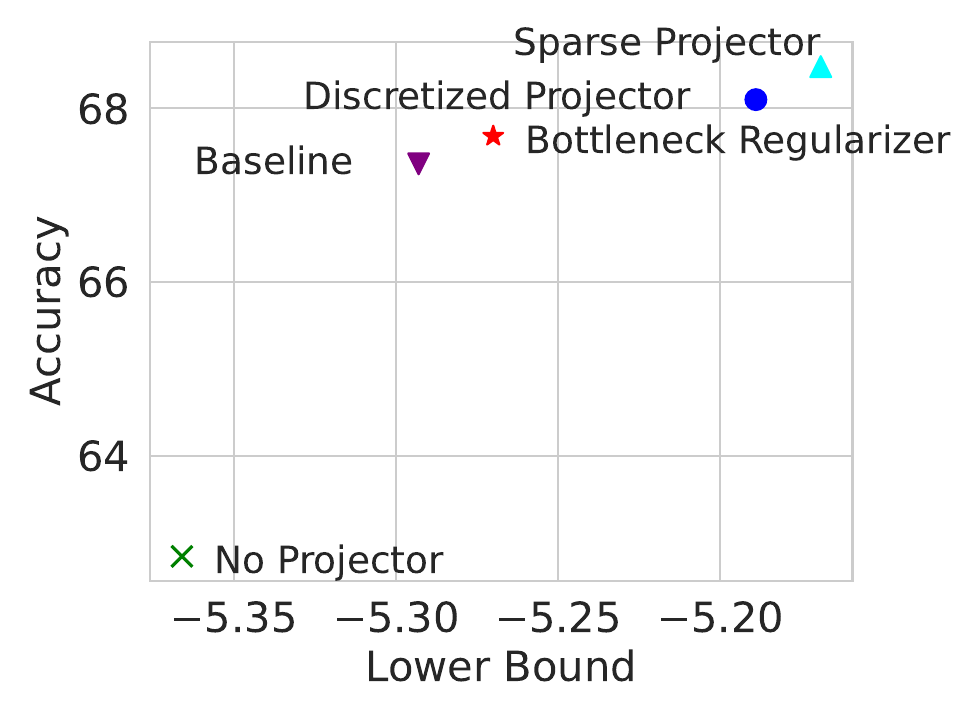}
        \caption{Estimated Lower Bounds of Downstream Performance of SimCLR on ImageNet-100}
        \label{fig:4 in dots graph}
    \end{subfigure}
    \hfill
    \caption{\add{Supplemental experiments on the correlation between estimated theoretical guarantees and downstream performance of SimCLR on CIFAR-10 and ImageNet-100.}}
    \label{fig:dots graph}
\end{figure}

\textbf{Validation of Proposed Methods.}
In this part, we focus on the correlation between the estimated theoretical guarantees and downstream performance. \add{The experiments are conducted on CIFAR-10, CIFAR-100 and ImageNet-100. We now detail the core parameters. 
On CIFAR-100, we set $\lambda=10^{-3}$ for the bottleneck regularizer, $L=20$ for the discretized projector and $k=10^3$ for the sparse projector. 
On CIFAR-10, we set $\lambda=10^{-4}$ for the bottleneck regularizer, $L=20$ for the discretized projector and $k=5\times 10^3$ for the sparse projector. 
On ImageNet-100, we take $\lambda=10^{-3}$ for the bottleneck regularizer, $L=30$ for the discretized projector and $k=10^3$ for the sparse projector. 
The result on CIFAR-100 is displayed in Figure \ref{fig:8} and \ref{fig:9}, while the rest can be found in Figure \ref{fig:dots graph}.}

\textbf{Ablation Study of Regularization Strengths.}
In this part, we study the effect of different parameters in our methods on CIFAR-100. For the bottleneck regularizer, we take $\lambda=0 ,\, 10^{-4} ,\, 10^{-3} ,\, 0.01 ,\, 0.05 ,\, 0.1$ to form the line chart. As for the discretized projector, we take $L=5 ,\, 7 ,\, 10 ,\, 30 ,\, 50 ,\, 100$. Regarding the sparse projector, we explore $k$ with values in $10 ,\, 100 ,\, 10^3 ,\, 5 \times 10^3 ,\, 10^4 ,\, 10^5$.

\subsection{Calculation of Mutual Information}

\add{In all our experiments, we take $\alpha=2$ when calculating mutual information. In this part, we clarify on the reason of our choice. Following~\citep{tan2023information}, the 1-order (R\'enyi) entropy for matrix A where $\alpha=1$ is defined as 
\[
H_1(A) = - \text{tr} \left( \frac{A}{n}\log \frac{A}{n}\right).
\]}

\begin{table}[ht]\centering
    \caption{\add{Linear probing accuracy of the models trained with bottleneck regularization (implemented by selecting different $\alpha$) on CIFAR-100.}}
    \label{tab:different alpha}
    \begin{tabular}{lccc}
        \toprule 
    $\alpha$ & 1 & 2 & 3 \\ \midrule
    ACC & 59.06 & \textbf{59.38} & 58.59 \\
    \bottomrule
    \end{tabular}
\end{table}

\add{This combined with Definition \ref{matrix entropy} yields the complete definition of matrix entropy and thus  matrix mutual information. We then use the training regularization setting to test the downstream accuracy of SimCLR on CIFAR-100 with different choices of $\alpha$, where the coefficient of the regularization term is $10^{-4}$.
In Table~\ref{tab:different alpha}, we find that the downstream accuracy of the case with $\alpha=2$ is slightly higher than that of the one with $\alpha=1$, so $\alpha=2$ tends to be a better choice in the sense of downstream performance. Moreover, we find that training with $\alpha=1$ is way more time-consuming because of Singular Value Decomposition, which further motivates us to choose $\alpha=2$ eventually.}

\subsection{Ablation Study of Information Bottleneck \texorpdfstring{$I(Z_1;Z_2)$}{I(Z1;Z2)}}

\add{In order to rigorously verify that the downstream performance is indeed mainly influenced by $I(Z_1;Z_2)$, we collect data points with several fixed values of $I(Z_1;R)$ and study the relationship between downstream performance and $I(Z_1;Z_2)$. Typically, we choose three cases where $I(Z_1;R) \approx -4.25, -4.00, -3.75$ (with errors no more than 0.01) and collect 6 points for each case. The results in Figure~\ref{fig:Fix I(Z1;R)} indicate that the downstream accuracy indeed rises as $I(Z_1;Z_2)$ falls, and the absolute value of correlation between $I(Z_1;Z_2)$ and the downstream accuracy when $I(Z_1;R)$ is fixed is close to 1, which further justifies the role of $I(Z_1;Z_2)$ as an information bottleneck.}

\begin{figure}
    \centering
    \begin{subfigure}{0.32\textwidth}
    \includegraphics[width=\textwidth, height=4cm]{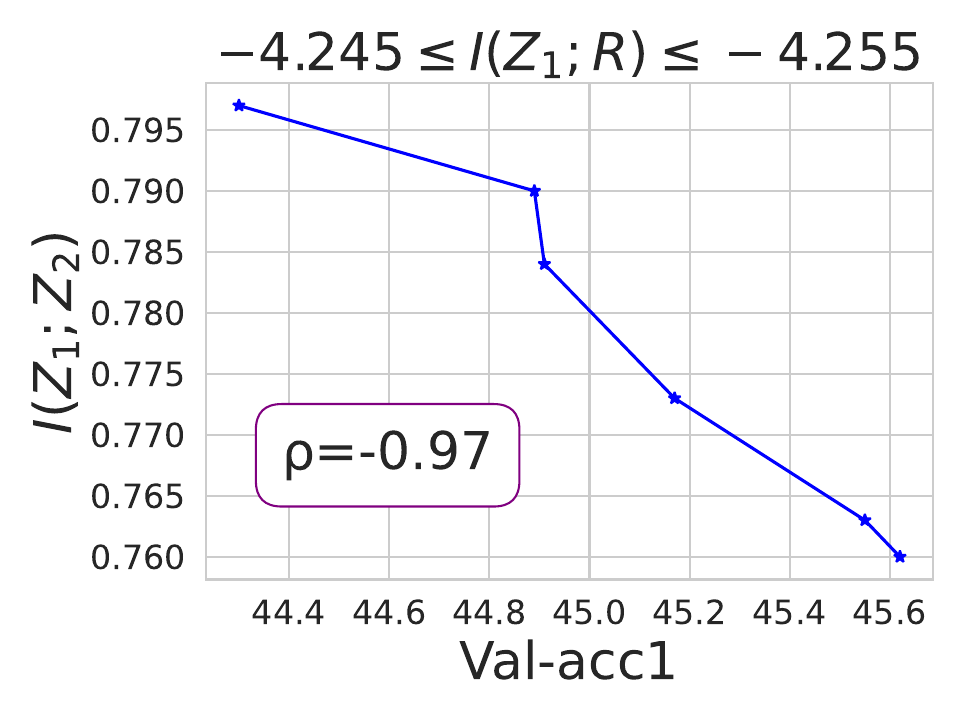}
        \caption{Fix $I(Z_1;R)$ to around -4.250}
        \label{fig:1 in fix}
    \end{subfigure}
    \hfill
    \begin{subfigure}{0.32\textwidth}
    \includegraphics[width=\textwidth, height=4cm]{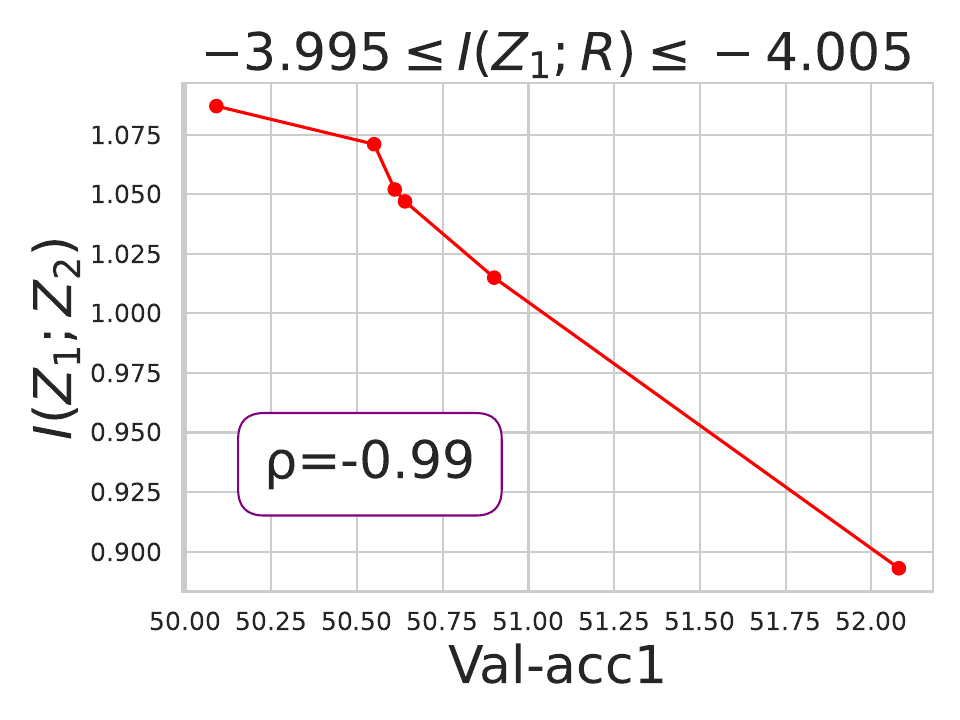}
        \caption{Fix $I(Z_1;R)$ to around -4.000}
        \label{fig:2 in fix}
    \end{subfigure}
    \begin{subfigure}{0.32\textwidth}
    \includegraphics[width=\textwidth, height=4cm]{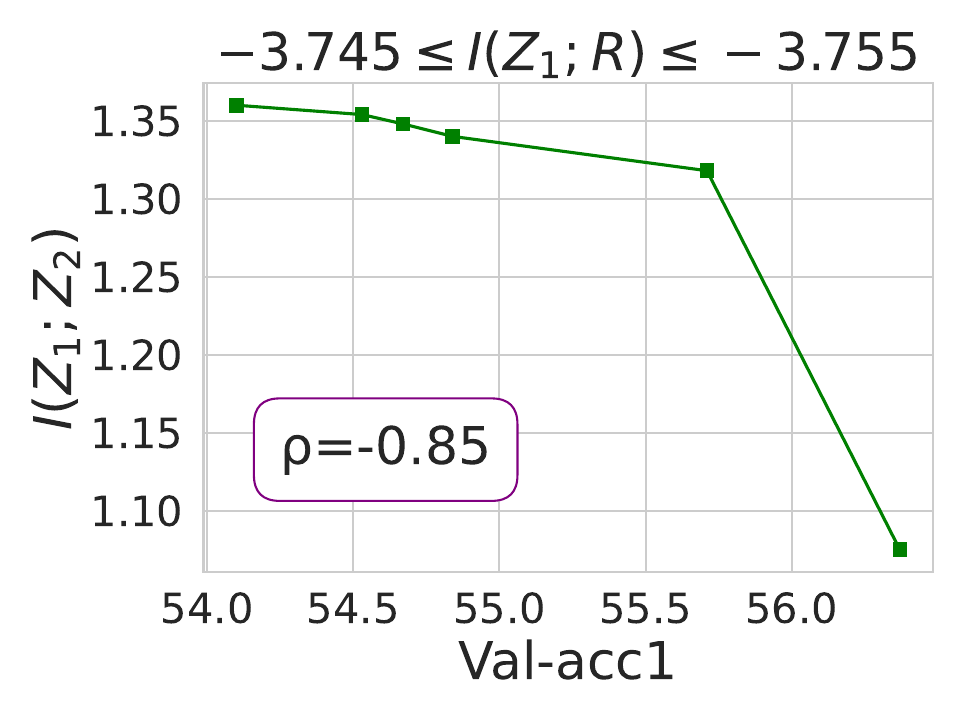}
        \caption{Fix $I(Z_1;R)$ to around -3.750}
        \label{fig:3 in fix}
    \end{subfigure}
    \caption{\add{
    Relationship between downstream performance and $I(Z_1;Z_2)$ with fixed $I(Z1;R)$.
    }}
    \label{fig:Fix I(Z1;R)}
\end{figure}

\subsection{Correlation Between the Online Accuracy of Encoder and Projector Features}

\add{In this part, we aim to investigate the correlation between $I(Y;Z_1)$ and $I(Y;Z_2)$. For this, we attach a new classifier after the projector to obtain the classification accuracy of the projector feature (i.e., $I(Y;Z_2)$). As shown in Figure \ref{fig:I(Y;Z1)&I(Y;Z2) corr}, the correlation between the online accuracy of encoder and projector features is fairly small, indicating that $I(Y;Z_2)$ may not have a fixed relationship with $I(Y;Z_1)$. This further implies that although the inequality $I(Y;Z_1) \leq I(Y;Z_2)$ holds, maximizing $I(Y;Z_2)$ does not necessarily mean maximizing $I(Y;Z_1)$.}

\begin{figure}
    \centering
    \includegraphics[width=0.4 \textwidth]{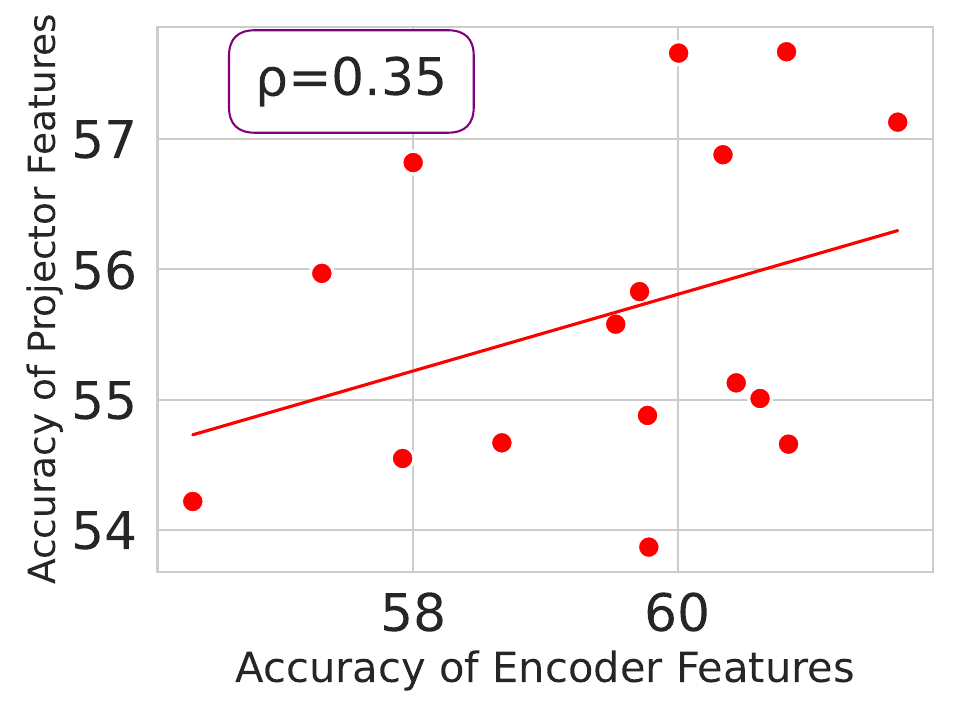}
    \caption{\add{The correlation between the online accuracy of encoder and projector features.}}
    \label{fig:I(Y;Z1)&I(Y;Z2) corr}
\end{figure}

\subsection{Comparison Between Training and Structural Regularization}
\add{Although both training and structural regularization can effectively improve downstream performance, we observe better results with structural regularization, which motivates us to explore the comparison between these two types of regularization. We conjecture that the underlying reason why training regularization has inferior downstream performance is that though it lowers the value of $I(Z_1;Z_2)$, it fails to protect $I(Z_1;R)$, resulting in a poor lower bound and thus poor downstream accuracy. To prove such an insight, we use larger coefficients for the regularization term to align the mutual information $I(Z_1;Z_2)$ with those of the structural regularization methods. Typically, we set $\lambda=0.1$ for comparison with discretized and sparse projectors. As shown in Table~\ref{tab:comparison}, we find that even if $I(Z_1;Z_2)$ of the bottleneck regularizer is lower than those of structural regularization methods, it has an inferior lower bound and poorer downstream performance due to low $I(Z_1;R)$, which accords with our conjecture.}

\begin{table}[ht]\centering
    \caption{\add{Comparison between Training and Structural Regularization.}}
    \label{tab:comparison}
    \begin{tabular}{lcccc}
        \toprule 
    Method & $I(Z_1;Z_2)$ & $I(Z_1;R)$ & lower bound & ACC \\ \midrule
    Bottleneck Regularizer ($\lambda=10^{-4}$) & 1.28 & -3.70 & -4.98 & 59.38 \\
    Bottleneck Regularizer ($\lambda=0.1$) & 0.36 & -5.39 & -5.75 & 50.72 \\
    Discretized Projector & 1.16 & -3.52 & -4.68 & 60.16 \\
    Sparse Projector & 1.02 & -3.49 & -4.51 & 61.99 \\
    \bottomrule
    \end{tabular}
\end{table}

\subsection{Generalization of Our Methods to Supervised Learning}

\add{In this section, we generalize our training and structural regularization methods to supervised learning. To elaborate, we add a new single-layer classifier to the end of the projection head, where it has access to the ground truth labels of images. During training, we simply shift the initial InfoNCE loss to the cross entropy loss of the new classifier. Note that the other classifier attached to the end of the encoder remains and our accuracy is measured from this classifier instead of the new one. We collect the classification accuracy of SimCLR on CIFAR-100. In Table~\ref{tab:supervised}, we find that both adding a regularization term (with the coefficient $\lambda=10^{-4}$) and using a discretized projector (with discrete point number $L=10$) outperforms the baseline, indicating that our proposed methods can be well generalized to the supervised learning setting.}

\begin{table}[ht]\centering
    \caption{\add{Classification accuracy of the features before the projector in supervised learning.}}
    \label{tab:supervised}
    \begin{tabular}{lccc}
        \toprule 
    Method & Baseline & Bottleneck Regularizer & Discretized Projector \\ \midrule
    ACC & 71.85 & 72.65 & 72.85 \\
    \bottomrule
    \end{tabular}
\end{table}

\subsection{Supplementary Results with Nonlinear Classifiers}

\add{In order to further verify that the downstream performance is bottlenecked by $I(Z_1;Z_2)$ instead of the capability of the classifier, we conduct additional experiments and test the downstream accuracy of our methods with nonlinear classifiers. To be more specific, we finetune an offline classifier for 200 epochs after pretraining under the SimCLR framework on CIFAR-100. We set the coefficient $\lambda=10^{-4}$ for the bottleneck regularizer, discrete point number $L=30$ for the discretized projector, and activated features $k=5 \times 10^3$ for the sparse projector. As displayed in Table~\ref{tab:nonlinear}, the accuracies with both linear and nonlinear classifiers improve with the use of our methods, which indicates that it is not the classifier but the mutual information $I(Z_1;Z_2)$ that restrains the downstream performance.}

\begin{table}[ht]\centering
    \caption{\add{Fine-tuning accuracy of ResNet-18 pretrained by SimCLR with the original and our proposed projectors on CIFAR-100.}}
    \label{tab:nonlinear}
    \begin{tabular}{lcccc}
    \toprule
    Method & Baseline & Bottleneck Regularizer & Discretized Projector & Sparse Projector \\ \midrule
    Finetuning ACC & 72.31 & 72.88 & 73.27 & 73.48 \\
    \bottomrule
    \end{tabular}
\end{table}

\subsection{Combinations of Different Regularization Methods}

\add{Given that we have achieved a certain amount of improvement by means of training and structural regularization, it is intriguing to investigate whether the combination of several of the proposed methods can yield even better performance. With the default parameters, we conduct experiments using different combinations and collect the downstream accuracy of SimCLR on CIFAR-100, which are listed in Table~\ref{tab:combination}. We observe that combining any two of our proposed methods can bring further improvement to downstream performance.}

\begin{table}[ht]\centering
    \caption{\add{Linear probing performance of the combination of our proposed projectors on CIFAR-100. BR: bottleneck regularizer; DP: discretized projector; SP: sparse projector.}}
    \label{tab:combination}
    \begin{tabular}{lccccccc}
    \toprule
    Method & BR & DP & SP & BR+DP & BR+SP & DP+SP & BR+DP+SP \\ \midrule
    ACC & 59.38 & 60.16 & 61.99 & 60.68 & \textbf{63.23} & 62.42 & 62.49 \\ 
    \bottomrule
    \end{tabular}
\end{table}

\section{Theoretical Validation on the Discretization Method}
\label{sec: theory of discretized projectors}

In this section, we provide a proof for theorem \ref{theorem-3} which gives a theoretical illustration for the discretized projector.

\begin{proof}
We consider the trade-off between $I(Z_1;R)$ and $I(Z_1;Z_2)$.
\begin{align*}
I(Z_1;R)-I(Z_1;Z_2)
&=H(R)-H(R|Z_1)-H(Z_2)+H(Z_2|Z_1)\\
&= H(R)-H(Z_2) -(H(R|Z_1)-H(Z_2|Z_1))\\
&=H(R)-H(Z_2)-(H(R|Z_2)-H(Z_2|R,Z_1)) \\ 
&=-H(Z_2)+I(Z_2;R)+H(Z_2|R,Z_1)  \\
&\geq -H(Z_2)+I(Z_2;R)
\end{align*}
It remains to be shown that $H(R|Z_1)-H(Z_2|Z_1)=H(R|Z_2)-H(Z_2|R,Z_1)$ holds. According to lemma \ref{lem-1}, we know that $Z_1$ and $R$ are independent random variables when given $Z_2$, i.e., $H(R|Z_2)=H(R|Z_2,Z_1)$. Therefore, we have
\begin{align*}
H(R|Z_2)-H(Z_2|R,Z_1)
&=H(R|Z_2,Z_1)-H(Z_2|R,Z_1)\\
&=(H(R,Z_2|Z_1)-H(Z_2|Z_1))-H(Z_2|R,Z_1)\\
&=(H(R,Z_2|Z_1)-H(Z_2|R,Z_1))-H(Z_2|Z_1)\\
&=H(R|Z_1)-H(Z_2|Z_1).
\end{align*}
Thus the proof of theorem \ref{theorem-3} is finished and it provides a theoretical perspective to understand how the discretized projector works and why there is a peak in downstream performance when the discrete point number grows from one to infinity.
\end{proof}

\end{document}